\theoremstyle{plain}
\newtheorem{theorem}{Theorem}[section]
\newtheorem{proposition}[theorem]{Proposition}
\newtheorem{lemma}[theorem]{Lemma}
\newtheorem{corollary}[theorem]{Corollary}
\theoremstyle{definition}
\newtheorem{definition}[theorem]{Definition}
\newtheorem{assumption}[theorem]{Assumption}
\newtheorem{axiom}[theorem]{Axiom}
\theoremstyle{remark}
\newtheorem{remark}[theorem]{Remark}
\newcommand{\mypara}[1]{\noindent\textbf{#1}}
\newcommand{\methodname}{PRISM}
\newcommand{\service}[1]{\textcolor{brown!50!black}{#1}}
\newcommand{\internal}[1]{\textcolor{blue}{#1}}
\newcommand{\external}[1]{\textcolor{green!50!black}{#1}}
\begin{document}

\twocolumn[
  \icmltitle{Graph-Free Root Cause Analysis}

  \begin{icmlauthorlist}
    \icmlauthor{Luan Pham}{rmit}
  \end{icmlauthorlist}
  \icmlaffiliation{rmit}{RMIT University, Australia}
  \icmlcorrespondingauthor{Luan Pham}{luan.pham@rmit.edu.au}
  \icmlkeywords{Root Cause Analysis, Causality}
  \vskip 0.3in
]

\printAffiliationsAndNotice{}  

\begin{abstract}

Failures in complex systems demand rapid Root Cause Analysis (RCA) to prevent cascading damage. Existing RCA methods that operate without dependency graph typically assume that the root cause having the \textit{highest anomaly score}. This assumption fails when faults propagate, as a small delay at the root cause can accumulate into a much larger anomaly downstream. In this paper, we propose \textbf{\methodname}, a simple and efficient framework for RCA when the dependency graph is absent. We formulate a class of component-based systems under which \methodname{} performs RCA with theoretical guarantees. On 735 failures across 9 real-world datasets, \methodname{} achieves 68\% \mbox{Top-1} accuracy, a 258\% improvement over the best baseline, while requiring only 8ms per diagnosis.
\end{abstract}

\section{Introduction}

Modern IT systems drive critical infrastructure, from hospital services to financial markets. When these systems fail, rapid root cause identification determines whether operators can intervene before damage cascades. Delays may lead to significant economic losses~\cite{yahoo_amazon_downtime_2018}, or even fatalities~\cite{Gregory2025Optus}. Root Cause Analysis (RCA) has thus attracted substantial research attention across domains~\cite{orchardroot, li2025root, jhaitbench, pham2025rcaeval, zhang2025adaptive}. Given a failure, RCA aims to identify the component whose mechanism changed and subsequently propagated its effects to other components.

Existing RCA approaches face a fundamental dichotomy. Methods that assume known structural dependencies reduce RCA to a traversal problem, where the root cause is a component with no anomalous parents~\cite{budhathoki2022causal, li2022causal, xin2023causalrca}. However, this knowledge is often unavailable in large, evolving systems. Methods that construct causal graphs from observational data using causal discovery techniques have also shown limited success. \citet{pham2024root} evaluate twenty-one causal inference-based RCA methods and conclude that they fall short, largely due to the limitations of causal discovery. Therefore, it is necessary to develop efficient RCA techniques without requiring structural dependency knowledge~\cite{orchardroot, pham2024baro}. This raises a fundamental question: \textbf{When the structural dependency graph is unknown, how can we determine the root cause?}

A natural alternative is to rank root cause candidates by their anomaly score, assuming that \textit{the root cause exhibits the highest anomaly score}~\cite{orchardroot,pham2024baro,shan2019diagnosis}. For instance, \citet{orchardroot} claim that ``an anomaly $x_i$ is unlikely to cause a much larger anomaly $x_j$ anywhere downstream.'' However, this assumption fails when faults propagate and amplify. Let's consider a concrete fan-in pattern where a service calls a backend $k$ times sequentially. If the backend experiences a small delay $\Delta$, the service observes a cumulative delay of $k\Delta$. With $k > 1$, the downstream service appears more anomalous than the root cause, misleading score-based RCA methods.

Our key observation is that root causes and affected components exhibit a fundamental asymmetry. Root causes show anomalies in both internal properties (e.g., config states) and external properties (e.g., response latency, error rate). Affected components, by contrast, show anomalies only in external properties because faults propagate through observable interfaces, not through internal state. This asymmetry persists regardless of how severely downstream anomalies amplify. We formalize this insight through the Component-Property Model, which captures systems where internal properties causally precede external properties and inter-component influence occurs only through external channels. Building on this model, we introduce \textbf{\methodname}, a simple and efficient RCA framework that exploits this internal--external distinction. In summary, our key contributions are:
\vspace{-5pt}
\begin{itemize}[leftmargin=*,nolistsep]
\item We propose \textbf{\methodname}, a simple and efficient RCA framework that effectively localizes root causes of failures. 
\item We introduce the Component-Property Model, formalizing a class of systems with internal-external property decomposition. We show that, under this formulation, \methodname{} performs RCA with theoretical guarantees.
\item We conduct extensive experiments on nine real-world datasets from RCAEval, comprising 735 failures. \methodname{} achieves 68\% Top-1 accuracy, a 258\% improvement over the best baseline, while requiring only 8ms per diagnosis.
\end{itemize}

\vspace{-5pt}
\section{Component-Property Model}\label{sec:system-model}

\vspace{-5pt}
A key challenge in RCA is to distinguish the root cause from affected components. When faults propagate, downstream anomalies can exceed upstream ones, making score-based ranking unreliable. To address this, we introduce the Component-Property Model, which formalizes a class of systems where internal and external properties are explicitly decomposed. This decomposition enables \methodname{} to identify root causes without requiring a dependency graph.

\begin{definition}\label{def:system}
A \textbf{component--based system} is a triple $\mathcal{S} = (\mathcal{C}, \mathcal{I}, \mathcal{E})$ where $\mathcal{C} = \{C_1, \dots, C_n\}$ is a set of $n$ components. $\mathcal{I} = \bigcup_{i=1}^{n} \mathbf{I}_i$ is the set of all internal properties, where $\mathbf{I}_i = \{I_i^1, \dots, I_i^{k_i}\}$ are internal properties of component $C_i$. $\mathcal{E} = \bigcup_{i=1}^{n} \mathbf{E}_i$ is the set of all external properties, where $\mathbf{E}_i = \{E_i^1, \dots, E_i^{m_i}\}$ are external properties of $C_i$.
\end{definition}

\begin{definition}\label{def:internal}
\textbf{Internal properties} $\mathbf{I}_i$ are local to component $C_i$ and not directly observable by other components. Each $I_i^k \in \mathbf{I}_i$ is a real-valued observable (e.g., resource utilization, queue length, internal state variables).
\end{definition}

\begin{definition}\label{def:external}
\textbf{External properties} $\mathbf{E}_i$ of component $C_i$ are observable by other components through interactions. Each $E_i^l \in \mathbf{E}_i$ is a real-valued observable. External properties can causally influence external properties of other components. For example, high latency in component $C_i$ may cause high latency in $C_j$ if $C_j$ depends on $C_i$.
\end{definition}

\vspace{-5pt}
The described component--based system, with its internal and external properties gives an abstraction of interconnected systems. In such systems, internal properties characterize how a component operates locally, whereas external properties describe the measurable outcomes of these operations as perceived by other components. Changes to a component's internal property may give rise to externally observable behavior. Although causal relationships exist both within and across components, we assume: (1) a directional asymmetry between internal and external properties within each component (Axiom~\ref{ax:direction}), and (2) inter-component causal influences occur only through external properties (Axiom~\ref{ax:isolation}). This abstraction allows \methodname{} to perform RCA without relying on an dependency graph.

\begin{figure}[t!]
\vspace{-5pt}
\centering
\resizebox{\columnwidth}{!}{
\begin{tikzpicture}[scale=0.9]
    \node[draw, rectangle, minimum width=3cm, minimum height=2.5cm] (C1) at (0,0) {};
    \node at (0, 1.8) {\textbf{Component $C_1$}};
    \node[draw, circle, fill=blue!20] (I11) at (-0.8, 0.8) {$I_1^1$};
    \node[draw, circle, fill=blue!20] (I12) at (0.8, 0.8) {$I_1^2$};
    \node[draw, circle, fill=green!20] (E11) at (0, -0.5) {$E_1^1$};
    \draw[->, thick] (I11) -- (E11);
    \draw[->, thick] (I12) -- (E11);

    \node[draw, rectangle, minimum width=3cm, minimum height=2.5cm] (C2) at (5,0) {};
    \node at (5, 1.8) {\textbf{Component $C_2$}};
    \node[draw, circle, fill=blue!20] (I21) at (4.2, 0.8) {$I_2^1$};
    \node[draw, circle, fill=blue!20] (I22) at (5.8, 0.8) {$I_2^2$};
    \node[draw, circle, fill=green!20] (E21) at (5, -0.5) {$E_2^1$};
    \draw[->, thick] (I21) -- (E21);
    \draw[->, thick] (I22) -- (E21);

    \draw[->, very thick, red] (E11) -- node[above] {\scriptsize \textit{influence}} (E21);

    \node[blue] at (-2.5, 0.8) {Internal};
    \node[green!50!black] at (-2.5, -0.5) {External};
\end{tikzpicture}%
}
\vspace{-10pt}
\caption{Component-Property Model. Solid arrows denote causal influence between \internal{internal properties}~(e.g., configs) and \external{external properties} (e.g., response time). Faults originate in internal properties and propagate through external properties (Axioms~\ref{ax:direction}--\ref{ax:isolation}).} \label{fig:system-structure}
\vspace{-10pt}
\end{figure}
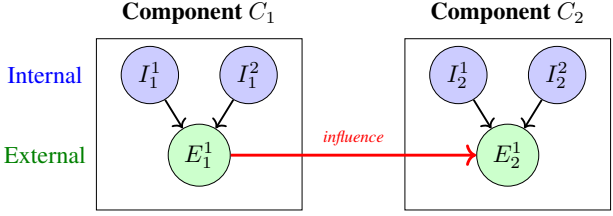

\begin{assumption}[Diagnostic Sufficiency]\label{asm:sufficiency}
The pre-- and post--failure observational data is sufficient to determine the root cause. First, the root cause component is instrumented with at least one internal and one external property. Second, faults manifest in these properties. Third, no unmeasured external factors confound the observed anomalies.
\end{assumption}

\vspace{-5pt}
Diagnostic sufficiency requires that the root cause component exhibits detectable anomalies in its properties. We term this \emph{fault manifestation}. Specifically, if $C_r$ is the root cause, properties of $C_r$ must exhibit a detectable deviation from normal behavior. This manifestation can be quantified via anomaly scoring. Unlike~\citet{orchardroot} and~\citet{pham2024baro}, we do not assume the root cause has the highest anomaly score. The challenge is to determine, among all anomalous components, which component is the root cause.

\begin{axiom}[Intracomponent]\label{ax:direction}
Internal properties are causal ancestors of external properties and there are no edges from external to internal properties within the same component.
\end{axiom}

\vspace{-5pt}
The Axiom~\ref{ax:direction} captures a fundamental asymmetry between internal and external properties in component-based systems. Internal properties represent the internal states of a component (e.g., configurations, resource usage). External properties (e.g., latency, error rate) are observable outcomes of these internal states and therefore arise as their effects. 

\begin{axiom}[Intercomponent Isolation]\label{ax:isolation}
Causal influences between different components occur through external properties. Formally, for any $i \neq j$, there are no direct causal edges from any property of $C_i$ to internal properties of $C_j$.
\end{axiom}

\vspace{-5pt}
In component-based software systems, components interact through well-defined interfaces. One component cannot directly alter another component's internal state variables (e.g., resource allocation, configuration). The only causal pathway between components is through observable external properties (e.g.,  response time, error rate). For a detailed discussion of systems satisfying these Axioms~\ref{ax:direction}--\ref{ax:isolation} and practical applicability guidance, see Appendix~\ref{app:cpm-scope}.

\begin{figure*}
\vspace{-5pt}
\centering
\input{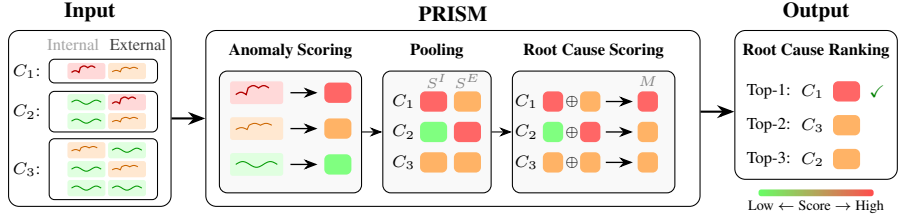}
\vspace{-5pt}
\caption{Overview of \methodname{}. Given observational data of internal and external properties for each component, \methodname{} first computes deviation-based anomaly scores, then pools property-level scores into component-level internal ($S^I_i$) and external ($S^E_i$) scores, and finally ranks root cause components. The root cause ($C_1$) exhibits substantially high anomaly scores for both internal and external properties.}
\label{fig:overview} 
\vspace{-10pt}
\end{figure*}

\vspace{-5pt}
\section{\methodname{}: Our Proposed Approach}\label{sec:prism}

Given the pre-- and post--failure observational data, \methodname{} ranks candidate root causes based on a simple principle: \emph{the root cause exhibits substantial anomalies in both internal and external properties, while downstream affected components show only external anomalies due to fault propagation}. Figure~\ref{fig:overview} gives an overview of \methodname{} pipeline, including anomaly scoring, score pooling, and root cause ranking.
\vspace{-5pt}

\subsection{Anomaly Scoring}\label{subsec:anomaly_scoring}

Anomaly scoring is foundational to RCA as all downstream reasoning depends on how deviations from normal behavior are quantified. We identify two conditions that a scorer must satisfy for reliable RCA: \textit{monotonicity} and \textit{injectivity}.

An anomaly scorer is a function $S: \mathcal{X} \to \mathbb{R}_{\geq 0}$. We adopt \emph{deviation-based anomaly scoring}, which directly measures how far an observation departs from a reference baseline.

\begin{definition}\label{def:deviation-score}
A \emph{deviation-based anomaly scorer} is a function: \(
    S(x; \theta) = \frac{|x - c(\theta)|}{s(\theta)}
\),
where $c(\theta)$ is a location parameter and $s(\theta) > 0$ is a scale parameter, both estimated from reference data with parameters $\theta$. Examples include z-score~\cite{orchardroot} or IQR-based score~\cite{pham2024baro}.
\end{definition}

For RCA, an anomaly scorer must preserve meaningful distinctions between deviation magnitudes, i.e., if one component deviates more than others, it should receive a higher score. We formalize this requirement through two conditions. \emph{Monotonicity} ensures larger deviations yield larger scores. \emph{Injectivity} ensures distinct deviations receive distinct scores, preventing ties that would make ranking impossible.

\begin{definition}[Monotonicity]\label{def:monotonicity}
An anomaly scoring function $S$ is \emph{monotonic} if larger deviations yield larger scores:
\begin{equation}
    |x - c| > |y - c| \implies S(x) > S(y).
\end{equation}
\end{definition}

\begin{definition}[Injectivity]\label{def:injectivity}
A scorer $S$ is \emph{injective} if it assigns distinct scores to observations with distinct deviations:
\begin{equation}
    |x_i - c| \neq |x_j - c| \implies S(x_i) \neq S(x_j), \quad \forall i \neq j.
\end{equation}
\end{definition}

\begin{lemma}\label{lem:mono-inject}
Monotonicity (Definition~\ref{def:monotonicity}) implies injectivity (Definition~\ref{def:injectivity}). The converse does not hold in general. 
\end{lemma}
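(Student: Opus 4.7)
The plan is to prove the forward implication directly from the definition and to refute the converse with a concrete counterexample. For the forward direction, I would take any pair of indices $i \neq j$ with $|x_i - c| \neq |x_j - c|$ and use trichotomy on $\mathbb{R}$ to reduce to the case $|x_i - c| > |x_j - c|$. Applying \cref{def:monotonicity} to this ordered pair (with $x = x_i$ and $y = x_j$) immediately yields $S(x_i) > S(x_j)$, and in particular $S(x_i) \neq S(x_j)$, which is exactly the conclusion required by \cref{def:injectivity}. The symmetric case $|x_j - c| > |x_i - c|$ is handled identically after swapping the two indices.

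To show the converse fails, I would exhibit a scorer that separates distinct deviations but orders them in the wrong direction. A minimal example is $S(x) = -|x - c|/s$, or any other strictly decreasing function of the deviation magnitude such as $S(x) = 1/(1 + |x - c|)$. Any such $S$ is injective as a function of the deviation and therefore satisfies \cref{def:injectivity}, yet it reverses the inequality in \cref{def:monotonicity}. A single such example is enough to certify that the converse does not hold in general; one could optionally note that the standard deviation-based scorers of \cref{def:deviation-score} do satisfy monotonicity, which is why the lemma is the relevant direction for the rest of the paper.

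I do not expect a real obstacle here, since the result is a short logical fact about orderings. The only point worth flagging in the write-up is that both definitions are stated in terms of the deviation magnitude $|x - c|$ rather than the raw value $x$, so when invoking monotonicity one must apply it to the deviations and avoid accidentally drawing a signed conclusion about $x_i$ and $x_j$ themselves. With that care, the proof reduces to one application of trichotomy plus one explicit counterexample.
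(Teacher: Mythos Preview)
Your forward direction is essentially the paper's: assume $|x-c| \neq |y-c|$, reduce by trichotomy to $|x-c| > |y-c|$ without loss of generality, apply monotonicity to obtain $S(x) > S(y)$, hence $S(x) \neq S(y)$.

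For the converse the two arguments diverge. The paper exhibits a piecewise scorer, $S(x) = |x-c|$ for $x \geq c$ and $S(x) = 0$ otherwise, and argues it is injective on $[c,\infty)$ but not monotone globally. You instead take a strictly \emph{decreasing} function of the deviation magnitude, e.g.\ $S(x) = 1/(1+|x-c|)$. Your route is arguably cleaner: the scorer is injective in the full sense of \cref{def:injectivity} (distinct deviations map to distinct values everywhere, not just on a half-line) while plainly reversing the inequality in \cref{def:monotonicity}. One small caveat: your first candidate $S(x) = -|x-c|/s$ takes values in $(-\infty,0]$ and therefore violates the codomain constraint $S:\mathcal{X}\to\mathbb{R}_{\geq 0}$ stated just before \cref{def:deviation-score}; the reciprocal example avoids this and is the one to present.
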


\begin{theorem}[Deviation-based Scorers satisfy conditions]\label{thm:deviation-conditions}
Let $S$ be a deviation-based anomaly score as in Definition~\ref{def:deviation-score}. Then $S$ is monotonic and injective.
\end{theorem}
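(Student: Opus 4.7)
The plan is to reduce both conditions to one elementary fact: division by the strictly positive scalar $s(\theta)$ is an order-preserving bijection on the nonnegative reals, so it carries both strict inequalities and distinctness from the numerator $|x - c(\theta)|$ to the score $S(x;\theta)$. Before doing anything else, I would fix $\theta$ throughout the argument and write $c := c(\theta)$ and $s := s(\theta) > 0$ as constants, since the scorer is a single function applied against one reference baseline. Making this explicit is really the only bookkeeping point in the proof.

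For monotonicity, I would start from the hypothesis $|x - c| > |y - c|$ and multiply both sides by $1/s$. Because $s > 0$ by Definition~\ref{def:deviation-score}, multiplication by $1/s$ is a strictly increasing map on $\mathbb{R}_{\geq 0}$, so the inequality transfers directly to $|x - c|/s > |y - c|/s$, which by definition is $S(x) > S(y)$. This settles the first half of the claim.

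For injectivity, the cleanest route is simply to invoke Lemma~\ref{lem:mono-inject}, which already delivers injectivity as a consequence of monotonicity that I have just established. If one prefers a self-contained argument, the same positivity of $s$ shows that $t \mapsto t/s$ is injective on $\mathbb{R}_{\geq 0}$, so distinct deviations $|x_i - c| \neq |x_j - c|$ produce distinct quotients and hence distinct scores.

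I do not expect any real obstacle; the theorem is essentially a one-line consequence of the assumption $s(\theta) > 0$ baked into the definition. The only comment worth making in passing is that standard instantiations such as the z-score and the IQR-based score satisfy this positivity requirement whenever the reference window is non-degenerate (positive standard deviation, positive interquartile range), which is implicit in treating $\theta$ as admissible parameters; this is where I would point the reader if they worry about edge cases rather than the structural argument itself.
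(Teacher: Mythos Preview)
Your proposal is correct and essentially identical to the paper's own proof: both fix $s>0$, obtain monotonicity by dividing the deviation inequality through by $s$, and then obtain injectivity by invoking Lemma~\ref{lem:mono-inject}. The only extra content in your write-up is the remark about non-degenerate reference windows, which the paper does not mention but which is a reasonable aside.
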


The proof is straightforward (see Appendix~\ref{app:proofs}). This result establishes that simple deviation-based scorers (z-scores, IQR-scores) are provably suitable for RCA, with both conditions following directly from the linear relationship between deviation magnitude and anomaly score.

Having established that deviation-based scores satisfy these conditions, we now show that injectivity is necessary.

\begin{theorem}\label{thm:rca-necessity}
Let $S$ be an anomaly scoring function. If $S$ is not injective, then there exist failure scenarios where $S$ cannot solve the RCA ranking problem.
\end{theorem}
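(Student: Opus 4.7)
The plan is to prove the contrapositive by explicit construction: exhibit a failure scenario consistent with the Component-Property Model in which $S$, failing injectivity, must assign identical scores to the root-cause component and an affected component, making their ranking fundamentally ambiguous.

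First, I unfold the negation of Definition~\ref{def:injectivity}: if $S$ is not injective, then there exist observations $x_i, x_j$ with $|x_i - c| \neq |x_j - c|$ yet $S(x_i) = S(x_j)$. I would fix one such pair and use it as the ``tie seed'' for the construction. The next step is to instantiate a minimal system $\mathcal{S} = (\{C_r, C_a\}, \mathcal{I}, \mathcal{E})$ satisfying Axioms~\ref{ax:direction}--\ref{ax:isolation} and Assumption~\ref{asm:sufficiency}, where $C_r$ is the true root cause and $C_a$ is a downstream component affected via external propagation. Choose the post-failure observation of $C_r$'s (single) internal property to realize one member of the tied pair and the corresponding observation of $C_a$'s external property to realize the other. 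Under the Component-Property Model this is realizable, since faults may amplify or attenuate along external channels, so no constraint forces one deviation to dominate the other.

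Second, I would show this scenario defeats any ranking derived solely from $S$. Because \methodname{}-style RCA (and score-based methods in general) maps each component to a vector of anomaly scores and totally orders components by these scores, two components whose property-wise scores agree must occupy the same rank. By symmetrizing the dummy properties on the two components and invoking the tie $S(x_i) = S(x_j)$, the score vectors of $C_r$ and $C_a$ coincide component-wise. Thus no deterministic ranker using only $S$ can place $C_r$ strictly above $C_a$; a scenario swapping the labels gives an indistinguishable observational trace, so the RCA ranking problem admits no solution using $S$ alone.

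The main obstacle is ruling out a ``free lunch'' tiebreak. One might object that an RCA method could combine $S$ with auxiliary signals (e.g., the raw observation $x$, or the internal/external decomposition from Axioms~\ref{ax:direction}--\ref{ax:isolation}) to break ties. I would address this by being precise about the theorem's scope: the claim concerns rankings induced by the anomaly scoring function $S$, consistent with the framing in Section~\ref{subsec:anomaly_scoring} where all downstream reasoning depends on $S$. To make the impossibility robust, I would further construct a \emph{pair} of indistinguishable scenarios in which the roles of $C_r$ and $C_a$ are exchanged while keeping every $S$-valued observable identical; any ranker that is a function of the scores alone must produce the same output on both, so at least one scenario is misdiagnosed. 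This isolates injectivity as genuinely necessary, complementing Theorem~\ref{thm:deviation-conditions}'s sufficiency direction.
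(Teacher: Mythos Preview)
Your proposal is correct and shares the paper's core idea: non-injectivity furnishes a pair of observations with distinct deviations but identical $S$-values, and this tie makes any $S$-based ranking ambiguous. The paper's own proof is considerably shorter: it simply posits two components receiving the tied scores and observes that any tie-breaking rule places the true root cause first with probability at most $1/2$ (or $1/m$ if $m$ components are tied), which already violates the RCA requirement.

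Where you differ is in the finishing argument and the level of construction. You embed the scenario explicitly in the Component-Property Model and then use a \emph{swap} (adversarial indistinguishability) argument: build two scenarios with the roles of $C_r$ and $C_a$ exchanged but with identical $S$-valued observables, so any deterministic function of the scores must err on at least one. This is arguably stronger than the paper's probabilistic bound, since it rules out deterministic tiebreakers rather than appealing to uniform random tie-breaking. On the other hand, your CPM embedding is more machinery than the theorem needs, and the intermediate step ``symmetrize the dummy properties so the score vectors coincide component-wise'' is underspecified as written (placing the tied values at $C_r$'s \emph{internal} and $C_a$'s \emph{external} property does not by itself make the $(S^I, S^E)$ vectors match). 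The swap argument at the end is what actually carries the proof, and it does not depend on that intermediate step; you could drop the CPM scaffolding and go directly to the two-scenario indistinguishability construction.
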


If $S$ assigns equal scores to distinct deviations, the root cause and affected components may receive identical scores despite different anomaly magnitudes, making correct ranking impossible. See Appendix~\ref{app:proofs} for the complete proof.

This result explains why certain scorers fail at RCA. For example, Information-Theoretic (IT) scorers violate injectivity when post-fault observations exceed the reference distribution. We analyze this problem empirically in Section~\ref{subsec:it-saturation}.

\vspace{-5pt}
\subsection{Score Pooling}\label{sec:pooling}

\vspace{-5pt}
We now pool property-level anomaly scores into component-level scores as different components may expose different numbers of properties depending on their specific instrumentation. A component with many properties should not necessarily outrank one with few. We therefore pool property-level scores into a single internal score $S^I_i$ and a single external score $S^E_i$ for each component $C_i$.

\begin{definition}[Score Pooling]\label{def:pooling}
A \emph{pooling function} $\phi: \mathbb{R}_{\geq 0}^n \to \mathbb{R}_{\geq 0}$ aggregates property-level anomaly scores into a component-level score. For each component $C_i$: 
\vspace{-5pt}
\[
S^T_i = \phi\bigl(\{S(P) : P \in \mathbf{T}_i\}\bigr), \quad T \in \{I, E\}.
\]
\vspace{-25pt}
\end{definition}
To preserve semantics, $\phi$ must satisfy two conditions. First, $\phi$ must be monotonic (Definition~\ref{def:monotonicity}), i.e.,~increasing any property-level score must not decrease the pooled score. This ensures that anomaly evidence is never suppressed during pooling. Second, $\phi$ must be permutation-invariant.

\begin{definition}[Permutation Invariance]\label{def:pool-perm}
A pooling function $\phi$ is \emph{permutation-invariant} if for any permutation $\pi$, 
\vspace{-5pt}
\[
\phi(x_1, \ldots, x_n) = \phi(x_{\pi(1)}, \ldots, x_{\pi(n)}).
\]
\vspace{-25pt}
\end{definition}
Common aggregators ($\max$, $\mathrm{mean}$, and $\mathrm{sum}$) satisfy both conditions. We empirically show in Section~\ref{subsec:robustness} that \methodname{} is effective and robust across these choices. The pooled scores $S^I_i$ and $S^E_i$ now serve as inputs to root cause ranking.

\vspace{-5pt}
\subsection{Root Cause Ranking}\label{sec:ranking}

\vspace{-5pt}
Given pooled scores $S^I_i$ and $S^E_i$ for each component $C_i$, we rank components to identify the root cause. The key insight is that the root cause exhibits anomalies in \emph{both} internal and external properties, while affected components show anomalies only in external properties.

Let $M: \mathbb{R}_{\geq 0}^2 \to \mathbb{R}_{\geq 0}$ combine internal and external scores into a root cause score. To ensure correct ranking, $M$ must satisfy two conditions: (1) \textit{monotonicity} in both arguments (Definition~\ref{def:monotonicity}), ensuring that stronger anomaly evidence yields higher scores; and (2) \emph{internal boundedness}:

\begin{definition}[Internal Boundedness]\label{def:internal-bound}
A scoring function $M$ satisfies \emph{internal boundedness} if there exists a monotonically increasing $f: \mathbb{R}_{\geq 0} \to \mathbb{R}_{\geq 0}$ such that $M(s^I, s^E) \leq f(s^I)$ for all $s^I, s^E \geq 0$.
\end{definition}

\vspace{-5pt}
Internal boundedness ensures that a component's overall score is controlled by its internal anomaly score, regardless of external score magnitude.  When $s^I$ is at nominal level (i.e., consistent with the pre-fault reference distribution), the bounding function $f(s^I)$ constrains $M$ to be small, preventing affected components from outranking the root cause due to \emph{amplified} external anomalies. The root cause, by contrast, has high scores for \emph{both} internal and external properties.

We now present two scorers satisfying these requirements: 

\vspace{-5pt}
\mypara{Additive Score.}
When both internal and external evidence should contribute additively, we use:
\begin{equation}\label{eq:add-score}
M_{\text{add}}(C_i) = S^I_i + S^E_i - \log(1 + S^I_i + S^E_i)
\end{equation}

\vspace{-5pt}
The additive score $M_\text{add}$ treats anomalies in internal and external properties as cumulative evidence, summing both contributions while applying a logarithmic correction to prevent any single source from dominating the final ranking.

\begin{proposition}\label{prop:add-valid}
$M_{\text{add}}$ satisfies monotonicity. Under bounded external amplification (i.e., $S^E \leq \alpha \cdot S^I + \beta$ for constants $\alpha, \beta$), it satisfies internal boundedness. 
\end{proposition}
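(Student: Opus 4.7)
The plan is to verify the two claims separately, both reducing to elementary calculus on the closed-form expression $M_{\text{add}}(s^I,s^E) = s^I + s^E - \log(1+s^I+s^E)$.

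For monotonicity, I would compute the partial derivative with respect to each argument. By symmetry it suffices to consider $\partial M_{\text{add}}/\partial s^I = 1 - 1/(1+s^I+s^E)$. Since $s^I, s^E \ge 0$, the denominator is at least $1$, so the derivative is non-negative (and strictly positive when $s^I+s^E>0$). The same computation applies to $\partial M_{\text{add}}/\partial s^E$. This confirms monotonicity in the sense of Definition~\ref{def:monotonicity}, as increasing either coordinate cannot decrease the score.

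For internal boundedness, I would build the bounding function $f$ in two steps. First, since $\log$ is monotonically increasing and $s^E \ge 0$, we have $\log(1+s^I+s^E) \ge \log(1+s^I)$, hence $-\log(1+s^I+s^E) \le -\log(1+s^I)$. Combining this with the amplification assumption $s^E \le \alpha s^I + \beta$ yields
\begin{equation*}
M_{\text{add}}(s^I,s^E) \;\le\; (1+\alpha)s^I + \beta - \log(1+s^I) \;=:\; f(s^I).
\end{equation*}
It then remains to check that $f$ is monotonically increasing on $\mathbb{R}_{\ge 0}$. Differentiating gives $f'(s^I) = (1+\alpha) - 1/(1+s^I)$, which is bounded below by $\alpha \ge 0$ for all $s^I \ge 0$ and is strictly positive as soon as $s^I > 0$ (or $\alpha > 0$). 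Hence $f$ is monotonically increasing, establishing internal boundedness per Definition~\ref{def:internal-bound}.

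The only mildly delicate point, and the one I would flag as the main obstacle, is that the $-\log$ term is decreasing in $s^E$, so one might worry that the upper bound loses control as $s^E$ grows. The resolution is that the coefficient on the linear $s^E$ contribution is $1$ while the log term only subtracts $\log(1+s^I+s^E)$, which grows sub-linearly; thus the amplification inequality is sufficient to absorb the $s^E$ dependence into a function of $s^I$ alone. Standard sign conventions ($\alpha,\beta \ge 0$) should be stated so that $f(0) = \beta \ge 0$ and the codomain $\mathbb{R}_{\ge 0}$ in Definition~\ref{def:internal-bound} is respected.
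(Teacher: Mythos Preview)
Your proof is correct and follows essentially the same approach as the paper: partial derivatives for monotonicity, and the amplification bound to absorb $s^E$ into a function of $s^I$ alone. The only minor difference is that you retain a $-\log(1+s^I)$ term in your bounding function, yielding a slightly tighter $f$, whereas the paper simply drops the log entirely via $\log(1+x)\ge 0$ to obtain the linear $f(s)=(1+\alpha)s+\beta$.
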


\vspace{-5pt}
\mypara{Conjunctive Score.}
An alternative is to require \emph{both} internal and external evidence to be high, taking their minimum:
\begin{equation}\label{eq:conj-score}
M_{\text{conj}}(C_i) = \min(S^I_i, S^E_i)
\end{equation}

\vspace{-5pt}
The conjunctive score $M_\text{conj}$ adopts a more conservative stance by taking the minimum of internal and external scores, thereby requiring \emph{both} sources of evidence to be elevated before assigning a high root cause score. It provides unconditional stronger guarantees against external anomaly amplification, at the cost of potentially underweighting components with asymmetric anomaly patterns.

\begin{proposition}\label{prop:conj-valid}
$M_{\text{conj}}$ satisfies monotonicity and internal boundedness with $f(s) = s$. See Appendix~\ref{app:proofs} for proofs.
\end{proposition}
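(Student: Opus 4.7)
The plan is to verify the two conditions imposed on a combinator $M$ (Definition~\ref{def:monotonicity} applied coordinate-wise, and Definition~\ref{def:internal-bound}) directly from the defining formula $M_{\text{conj}}(s^I, s^E) = \min(s^I, s^E)$. Both verifications are short, so the proof is essentially a two-line check; the only care needed is lining up the conditions with the right reading used elsewhere in the paper.

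First, I would establish monotonicity in each argument. Fixing $s^E \geq 0$ and taking $s^I_1 \leq s^I_2$, a three-case split on the position of $s^E$ relative to $s^I_1$ and $s^I_2$ shows $\min(s^I_1, s^E) \leq \min(s^I_2, s^E)$: if $s^E \leq s^I_1$, both minima equal $s^E$; if $s^I_1 \leq s^E \leq s^I_2$, the value rises from $s^I_1$ to $s^E$; and if $s^E \geq s^I_2$, it rises from $s^I_1$ to $s^I_2$. Symmetry of $\min$ gives the analogous statement in the $s^E$ argument. This matches the coordinate-wise non-decreasing reading of monotonicity that the paper already invokes for $\max$, $\mathrm{mean}$, and $\mathrm{sum}$ in Section~\ref{sec:pooling}.

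Second, for internal boundedness with $f(s) = s$, I would observe that $f$ is monotonically increasing on $\mathbb{R}_{\geq 0}$ and that, by the definition of the minimum, $M_{\text{conj}}(s^I, s^E) = \min(s^I, s^E) \leq s^I = f(s^I)$ for all $s^I, s^E \geq 0$. This verifies Definition~\ref{def:internal-bound} with the identity as the bounding function, and crucially without any assumption relating $s^E$ to $s^I$ --- in contrast to Proposition~\ref{prop:add-valid} for $M_{\text{add}}$, which needs a bounded external amplification condition of the form $S^E \leq \alpha S^I + \beta$. This unconditional bound is precisely the \emph{stronger guarantee against external amplification} advertised in the discussion preceding the proposition.

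No step is a genuine obstacle. The only subtlety is the mild mismatch between Definition~\ref{def:monotonicity}, stated with strict inequalities, and the merely non-strict monotonicity that $\min$ satisfies; I would resolve this the same way the paper implicitly does for $\max$ in Section~\ref{sec:pooling}, namely by reading monotonicity of the combinator $M$ as coordinate-wise non-decreasingness, which is all the ranking argument in Section~\ref{sec:ranking} actually consumes.
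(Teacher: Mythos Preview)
Your proposal is correct and takes essentially the same approach as the paper's own proof, which is a two-line check: monotonicity follows from the monotonicity of $\min$, and internal boundedness from $\min(s^I, s^E) \leq s^I$. You simply spell out the case analysis and the strict-versus-non-strict nuance more explicitly than the paper does.
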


\vspace{-5pt}
Both scorers satisfy the requirements for correct ranking (Theorem~\ref{thm:prism-correct}). The additive score offers finer score separation when external amplification is bounded, while the conjunctive score provides unconditional guarantees against arbitrary amplification. In Section~\ref{subsec:robustness}, we show that both of them achieve comparable empirical performance.

\vspace{-5pt}
\mypara{Theoretical Guarantee.}
The correctness of \methodname{} rests on the separation between root cause and affected components, i.e., faults originate in internal properties and propagate only through external properties.

\begin{lemma}\label{lem:internal-separation}
Under Axioms~\ref{ax:direction} and~\ref{ax:isolation}, suppose a fault originates in an internal property of component $C_r$ (the root cause). Let $C_a$ be any affected component. Then:
\vspace{-5pt}
\begin{enumerate}[leftmargin=*,nolistsep]
    \item $S^I(C_r) \geq S(I_r^k) > 0$ for some anomalous $I_r^k \in \mathbf{I}_r$
    \item $S^I(C_a)$ remains at nominal levels
    \item Under Assumption~\ref{asm:sufficiency}, $S^I(C_r) > S^I(C_a)$
\end{enumerate}
\end{lemma}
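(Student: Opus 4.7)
The plan is to prove the three claims sequentially, drawing directly on the causal structure of the Component--Property Model. For claim~1, by hypothesis the fault originates in some internal property $I_r^k$ of $C_r$. Assumption~\ref{asm:sufficiency} (Diagnostic Sufficiency) guarantees that the fault manifests as a detectable deviation in $I_r^k$, so $|I_r^k - c(\theta)| > 0$ under post-failure data. By Theorem~\ref{thm:deviation-conditions} the deviation-based scorer is strictly positive on any non-zero deviation, giving $S(I_r^k) > 0$. The inequality $S^I(C_r) \geq S(I_r^k)$ then follows from the monotonicity and non-negativity of the pooling operator $\phi$ (Definition~\ref{def:pooling}); for the canonical choices $\max$ and $\mathrm{sum}$ it holds for every $I_r^k$, and for $\mathrm{mean}$ it holds when $I_r^k$ is taken to be the score-maximizing internal property.

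For claim~2, I would argue via a causal-reachability (d-separation) argument on the graph induced by Axioms~\ref{ax:direction}--\ref{ax:isolation}. The fault is localized at $I_r^k$. Under Axiom~\ref{ax:isolation}, no causal edge from any property of any $C_i \neq C_a$ points into an internal property of $C_a$, so the perturbation cannot enter $C_a$ through the internal layer. Under Axiom~\ref{ax:direction}, the only within-component edges between internal and external properties are directed from internal to external, so even when external properties of $C_a$ become perturbed through external channels of upstream components, that perturbation cannot flow back into $\mathbf{I}_a$. Hence every internal property of $C_a$ has all of its causal ancestors unaffected by the fault, and its post-failure marginal distribution coincides with its reference distribution. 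The deviation-based scorer, calibrated against that same reference, therefore yields nominal values on each $I_a^l$, and by the monotonicity and permutation-invariance of $\phi$, the pooled score $S^I(C_a)$ remains at nominal levels.

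For claim~3, I would combine the first two: claim~1 places $S^I(C_r)$ above the detectable-anomaly threshold guaranteed by Assumption~\ref{asm:sufficiency}, while claim~2 places $S^I(C_a)$ in the nominal regime. Since diagnostic sufficiency presupposes that the scoring procedure separates manifested faults from baseline noise, the anomalous value $S(I_r^k)$ strictly exceeds any nominal score, and the sandwich $S^I(C_a) \leq \text{nominal} < S(I_r^k) \leq S^I(C_r)$ yields the strict inequality.

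The main obstacle is making claim~2 fully rigorous. The causal-isolation step does the real work, and one must be careful that ``remains at nominal levels'' is interpreted distributionally --- the joint law of $\mathbf{I}_a$ is invariant under the intervention at $I_r^k$ --- so that deviation-based scoring yields baseline-regime values rather than merely small ones. A secondary subtlety is that claim~1 requires selecting $I_r^k$ compatibly with the pooling choice; the cleanest route is to fix $I_r^k \in \arg\max_k S(I_r^k)$ from the outset, for which the bound $S^I(C_r) \geq S(I_r^k)$ holds uniformly across the pooling operators considered in Section~\ref{sec:pooling}.
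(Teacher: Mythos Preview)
Your proposal is correct and follows essentially the same approach as the paper: establish $S(I_r^k)>0$ via fault manifestation and bound it below by the pooled score, invoke the isolation axioms to keep $\mathbf{I}_a$ at its pre-fault distribution, and combine via a threshold sandwich. If anything, your Part~2 is slightly more careful than the paper's own proof, which invokes only Axiom~\ref{ax:isolation}; you correctly also use Axiom~\ref{ax:direction} to rule out the indirect path $\mathbf{E}_a \to \mathbf{I}_a$, and your argmax choice of $I_r^k$ cleanly handles the $\mathrm{mean}$-pooling case that the paper treats only weakly.
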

\vspace{-10pt}
\begin{theorem}\label{thm:prism-correct}
Let $M$ satisfy monotonicity and internal boundedness. If the root cause exhibits anomalies in both internal and external properties, while affected components exhibit anomalies only in external properties, then $M(C_r) > M(C_a)$ for any affected component $C_a$.
\end{theorem}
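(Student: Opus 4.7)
My plan is a sandwich argument: bound $M(C_a)$ from above via internal boundedness and $M(C_r)$ from below via monotonicity, then chain the two estimates using the internal separation established in Lemma~\ref{lem:internal-separation}.

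The first step is to invoke Lemma~\ref{lem:internal-separation}, which, under Axioms~\ref{ax:direction}--\ref{ax:isolation} and Assumption~\ref{asm:sufficiency}, yields the structural inequality $S^I(C_r) > S^I(C_a)$, with $S^I(C_r)$ strictly above the anomaly threshold and $S^I(C_a)$ at nominal level. The second step applies internal boundedness to obtain $M(C_a) = M(S^I(C_a), S^E(C_a)) \leq f(S^I(C_a))$. This is the decisive move against amplification: regardless of how large $S^E(C_a)$ grows from downstream fault accumulation, $M(C_a)$ is capped by a monotone function of the affected component's nominal internal score. Combined with monotonicity of $f$ applied to Step~1, it follows that $M(C_a) \leq f(S^I(C_a)) < f(S^I(C_r))$.

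The third step is to lower-bound $M(C_r)$ and complete the chain $M(C_r) > f(S^I(C_a)) \geq M(C_a)$. By hypothesis both $S^I(C_r) > 0$ and $S^E(C_r) > 0$, and $M$ is monotonic in both arguments, so $M(C_r)$ is elevated. To close the chain, I would case-split on the scorer. For $M_{\text{conj}}$ with $f(s) = s$, note that each argument of $\min(S^I(C_r), S^E(C_r))$ exceeds $S^I(C_a)$ by Lemma~\ref{lem:internal-separation} and the anomaly hypothesis, so $M_{\text{conj}}(C_r) > S^I(C_a) \geq M_{\text{conj}}(C_a)$ directly. For $M_{\text{add}}$, the bounded amplification condition $S^E(C_a) \leq \alpha S^I(C_a) + \beta$ gives an explicit form for $f$, after which substituting the nominal and anomalous values and using monotonicity of $x - \log(1+x)$ establishes the desired inequality.

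The hard part will be this third step, because internal boundedness only constrains $M$ from above — it does not by itself supply a matching lower bound on $M(C_r)$. Bridging the gap requires the joint-anomaly hypothesis together with structural information specific to each scorer (tightness of $f$ for $M_{\text{conj}}$, bounded amplification for $M_{\text{add}}$). A cleaner abstract route would be to strengthen internal boundedness to a two-sided envelope $g(s^I) \leq M(s^I, s^E) \leq f(s^I)$ for $s^E$ above threshold, so that the inequality propagates uniformly and the proof becomes a one-line chain without case analysis.
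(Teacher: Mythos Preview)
Your proposal is correct and mirrors the paper's proof almost exactly: both use Lemma~\ref{lem:internal-separation} for internal separation, apply internal boundedness to cap $M(C_a)$ by $f(S^I(C_a))$, and then case-split on $M_{\text{conj}}$ versus $M_{\text{add}}$ to obtain the matching lower bound on $M(C_r)$. Your diagnosis that internal boundedness alone cannot close the gap---and your suggested two-sided envelope fix---is apt; the paper likewise resorts to scorer-specific arguments rather than proving the claim for arbitrary $M$ satisfying only the stated conditions.
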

\vspace{-10pt}
\mypara{Why Not Rank by Internal Score Alone?}
If the root cause has high internal anomaly score, why not rank by $S^I$ alone? Relying solely on internal scores requires a strong form of Assumption~\ref{asm:sufficiency}, i.e., the fault must manifest with sufficient magnitude in internal properties to enable correct ranking. In practice, this may not hold. A code bug in microservices, for instance, may not produce a detectable spike in resource metrics. By incorporating external properties, \methodname{} relaxes this assumption, i.e., even when internal anomalies are weak, the combination of internal and external scores can still help to distinguish the root cause candidates from affected components, whose properties are at nominal levels.

\vspace{-5pt}
\section{Evaluation}

In this section, we conduct comprehensive experiments primarily aimed at answering the following research questions:

\vspace{-5pt}
\noindent \textbf{RQ1:} How effective is \methodname{}? \\
\noindent \textbf{RQ2:} How efficient is \methodname{}? \\
\noindent \textbf{RQ3:} How robust is \methodname{}? \\
\noindent \textbf{RQ4:} What is the contribution of each \methodname{} component? \\

\vspace{-10pt}
\subsection{Experimental settings}

\subsubsection{Datasets}

We evaluate \methodname{} on nine datasets in RCAEval benchmark~\cite{pham2025rcaeval}, comprising 735 failure cases. The datasets are collected from three microservice systems (Online Boutique, Sock Shop, and Train Ticket), containing 11 to 64 interacting components. The dataset contains multimodal data with 11 fault types  (4~resource faults, 2~network faults, and 5~code-level faults) with annotated root cause. Table~\ref{tab:datasets} summarizes the dataset statistics.

\mypara{Internal-External Property Decomposition.} We classify properties based on their observability to other components~\cite{li2022causal,googlesre}. External properties are observable metrics at component boundaries including response time and error rate. Internal properties are local resource states: CPU usage, memory utilization, disk I/O, and socket count. We further validate this classification empirically and link to previous works in Appendix~\ref{app:labeling}.

\vspace{-5pt}
\subsubsection{Baselines}

We compare \methodname{} against eight state-of-the-art RCA methods, covering a wide range of approaches, including:

\begin{table}[h]
\centering
\caption{Statistics of the nine RCAEval benchmark datasets.} \label{tab:datasets}
\vspace{-5pt}
\resizebox{\columnwidth}{!}{%
\begin{tabular}{llcccc}
\toprule
System & Dataset & \#Components & \#Properties & \#Faults & \#Cases \\
\midrule
\multirow{3}{*}{Online Boutique}
  & RE1OB & 12 & 49 & 5 & 125 \\
  & RE2OB & 12 & 72 & 6 & 90 \\
  & RE3OB & 12 & 69 & 5 & 30 \\
\midrule
\multirow{3}{*}{Sock Shop}
  & RE1SS & 13 & 59 & 5 & 125 \\
  & RE2SS & 13 & 75 & 6 & 90 \\
  & RE3SS & 13 & 80 & 4 & 30 \\
\midrule
\multirow{3}{*}{Train Ticket}
  & RE1TT & 64 & 220 & 5 & 125 \\
  & RE2TT & 64 & 369 & 6 & 90 \\
  & RE3TT & 64 & 315 & 4 & 30 \\
\midrule
\multicolumn{2}{l}{\textbf{Total}} & 12--64 & 49--369 & 11 & 735 \\
\bottomrule
\end{tabular}

}%
\vspace{-10pt}
\end{table}

\noindent Methods that do NOT require structural knowledge:
\renewcommand{\labelitemi}{$\triangleright$}
\begin{itemize}[nolistsep,leftmargin=*]
\item \textbf{BARO}~\cite{pham2024baro} uses a IQR-based anomaly scorer to analyse pre-- and post--failure observational data. BARO ranks the most anomalous property as root cause. 
\item \textbf{ScoreOrdering}~\cite{orchardroot} uses an Information-theoretic (IT) Anomaly Scorer and rank the component that has highest anomaly score as root cause.
\item \textbf{Cholesky}~\cite{li2025root} uses the Cholesky decomposition of the covariance matrix of observatinal data, exploiting permutation invariance to identify the root cause.
\item \textbf{PcPageRank}~\cite{pham2025rcaeval} constructs a causal graph from observational data using PC algorithm. Then, it uses PageRank to find the root cause, assuming the root cause affects other components during the failure time.
\end{itemize}

\noindent Methods that require structural knowledge:
\begin{itemize}[nolistsep,leftmargin=*]
\item \textbf{Counterfactual}~\cite{hardt2024petshop} finds the Shapley contribution of each property to the target property and outputs the one with the highest contribution as root cause.
\item \textbf{RCLAgent}~\cite{zhang2025adaptive} uses LLMs to recursively traversing and analysing anomalous components along the topology graph constructed from traces.
\item \textbf{Traversal}~\cite{orchardroot} identifies nodes which are anomalous, have no anomalous parents, and are linked to the target node via a path in the constructed graph.
\item \textbf{SmoothTraversal}~\cite{orchardroot} is an improvement of Traversal, incorporating IT anomaly scoring.
anomalous parents, and are linked to the target node via a path of anomalous nodes as root causes.
\end{itemize}

For all baselines, we follow the recommendations in their papers to set their hyperparameters. For methods with configurable thresholds, we follow established practice by running multiple settings and reporting the best performance.

\mypara{Retrieving Structural Knowledge.} The RCAEval benchmark does not provide the structural dependency graph. In order to run the baselines that require this graph, we engage a DevOps engineer to deploy the systems used in RCAEval, and instrument them to capture interaction between components (i.e., services). Note that while this instrumentation capture the structural knowledge, the latency overhead is confirmed significant. We release this data in our package.

\subsubsection{Evaluation Metrics}

We use two standard evaluation metrics: $\text{Top}@k$ and $\text{Avg}@k$ to measure the RCA performance. Given a set of failure cases $A$, $\text{Top}@k$ is calculated as follows,
\begin{equation}
    \text{Top}@k = \frac{1}{|A|} \sum\nolimits_{a\in A}\frac{\sum_{i<k}R^a[i]\in V^a_{rc}}{min(k, |V^a_{rc}|)},
\end{equation}
where $R^a[i]$ is the $i$th ranking result for the failure case $a$ by an RCA method, and $V^a_{rc}$ is the true root cause set of case $a$. $\text{Top}@k$ represents the probability the top $k$ results of the given method include the true root causes. Its values range from $0$ to $1$, with higher values indicating better performance. $\text{Avg}@k$, which shows the overall RCA performance, is measured as $\text{Avg}@k = \frac{1}{k}\sum_{j=1}^k \text{Top}@j$.


To assess whether performance differences are 
meaningful, we employ the paired Wilcoxon signed-rank test with Holm-Bonferroni correction~\cite{mcdonald2014wilcoxon}. 
When comparing \methodname{} against $k$ baselines, we apply Holm-Bonferroni correction to control the family-wise error rate at $\alpha$=0.05. In our tables, we denote \text{(+)} to indicate \methodname{} performs significantly better than the baseline, \text{(-)} to indicate significantly worse, and {\scriptsize $(\approx)$} for no significant difference.

\begin{table}[htbp]
\vspace{-5pt}
\centering
\caption{Overall Performance on Nine Tested Datasets.}
\vspace{-5pt}
\label{tab:rca_rcaeval}
\resizebox{0.9\columnwidth}{!}{%
\begin{tabular}{lcccc}
\toprule
Method & \multicolumn{3}{c}{AVERAGE} & \multicolumn{1}{c}{STAT} \\
\cline{2-4} \cline{5-5}
 & Top-1 & Top-3 & Avg@5 & p-value \\
\midrule
Counterfactual & 0.07 & 0.12 & 0.12 & $1.2e^{-112}${\tiny (+)} \\
RCLAgent & 0.10 & 0.18 & 0.16 & $5.4e^{-39}${\tiny (+)} \\
Smooth-Traversal & 0.05 & 0.19 & 0.16 & $1.3e^{-113}${\tiny (+)} \\
PC-PageRank & 0.09 & 0.21 & 0.21 & $3.9e^{-110}${\tiny (+)} \\
Cholesky & 0.09 & 0.22 & 0.22 & $1.7e^{-107}${\tiny (+)} \\
Simple-Traversal & 0.09 & 0.24 & 0.22 & $3.1e^{-108}${\tiny (+)} \\
Score-Ordering & 0.11 & 0.30 & 0.29 & $8.6e^{-101}${\tiny (+)} \\
BARO & \underline{0.19} & \underline{0.74} & \underline{0.63} & $1.3e^{-61}${\tiny (+)} \\
\midrule
\textbf{\methodname} & \textbf{0.68} & \textbf{0.91} & \textbf{0.87} & N/A \\
\bottomrule
\end{tabular}

}%
\vspace{-7pt}
\end{table}

\subsection{Effectiveness in RCA}\label{subsec:effectiveness}

To answer RQ1, we evaluate \methodname{} against eight state-of-the-art baselines on nine datasets from the RCAEval benchmark, a total of 735 failure cases. Tables~\ref{tab:rca_rcaeval} and~\ref{tab:rca_re2ob} present the overall results and the detailed results on the RE2OB dataset, respectively. Due to space limitations, we put the detailed results for other datasets in Appendix~\ref{appendix:rca-effectiveness}.

\begin{table*}[htbp]
\centering
\caption{RCA Method Performance on RE2OB (Online Boutique with multimodal data)} \label{tab:rca_re2ob}
\vspace{-3pt}
\resizebox{\textwidth}{!}{%
\setlength{\tabcolsep}{2pt}
\begin{tabular}{lcccccccccccccccccccccc}
\toprule
Method & \multicolumn{3}{c}{CPU} & \multicolumn{3}{c}{DELAY} & \multicolumn{3}{c}{DISK} & \multicolumn{3}{c}{LOSS} & \multicolumn{3}{c}{MEM} & \multicolumn{3}{c}{SOCKET} & \multicolumn{3}{c}{AVERAGE} & \multicolumn{1}{c}{STAT} \\
\cline{2-4} \cline{5-7} \cline{8-10} \cline{11-13} \cline{14-16} \cline{17-19} \cline{20-22} \cline{23-23}
 & Top-1 & Top-3 & Avg@5 & Top-1 & Top-3 & Avg@5 & Top-1 & Top-3 & Avg@5 & Top-1 & Top-3 & Avg@5 & Top-1 & Top-3 & Avg@5 & Top-1 & Top-3 & Avg@5 & Top-1 & Top-3 & Avg@5 & p-value \\
\midrule
Simple-Traversal & 0.07 & 0.13 & 0.12 & 0.00 & 0.13 & 0.12 & 0.00 & 0.07 & 0.05 & 0.07 & 0.13 & 0.16 & 0.13 & 0.13 & 0.17 & 0.00 & 0.13 & 0.11 & 0.04 & 0.12 & 0.12 & $7.9e^{-17}${\tiny (+)} \\
Smooth-Traversal & 0.07 & 0.13 & 0.11 & 0.00 & 0.07 & 0.07 & 0.00 & 0.13 & 0.12 & 0.07 & 0.20 & 0.19 & 0.07 & 0.33 & 0.24 & 0.00 & 0.20 & 0.13 & 0.03 & 0.18 & 0.14 & $9.1e^{-17}${\tiny (+)} \\
PC-PageRank & 0.00 & 0.20 & 0.23 & 0.13 & 0.20 & 0.24 & 0.00 & 0.07 & 0.09 & 0.00 & 0.13 & 0.13 & 0.00 & 0.07 & 0.08 & 0.00 & 0.20 & 0.12 & 0.02 & 0.14 & 0.15 & $2.0e^{-16}${\tiny (+)} \\
Counterfactual & 0.00 & 0.20 & 0.15 & \underline{0.20} & 0.27 & 0.24 & 0.13 & \underline{0.27} & 0.24 & 0.07 & 0.13 & 0.13 & 0.13 & 0.13 & 0.13 & \underline{0.13} & 0.13 & 0.13 & 0.11 & 0.19 & 0.17 & $7.7e^{-16}${\tiny (+)} \\
Cholesky & 0.07 & 0.13 & 0.16 & 0.07 & 0.13 & 0.17 & 0.13 & \underline{0.27} & 0.23 & 0.07 & 0.07 & 0.13 & 0.07 & 0.13 & 0.16 & 0.07 & 0.27 & 0.25 & 0.08 & 0.17 & 0.18 & $9.7e^{-16}${\tiny (+)} \\
RCLAgent & \underline{0.20} & 0.20 & 0.20 & 0.07 & \underline{0.33} & 0.28 & 0.13 & \underline{0.27} & 0.23 & 0.27 & 0.47 & 0.41 & 0.27 & \underline{0.40} & 0.35 & \underline{0.13} & 0.27 & 0.27 & 0.18 & 0.32 & 0.29 & $1.6e^{-14}${\tiny (+)} \\
Score-Ordering & 0.07 & 0.20 & 0.21 & 0.00 & 0.13 & 0.12 & 0.07 & \underline{0.27} & 0.36 & 0.20 & 0.33 & 0.36 & 0.07 & 0.20 & 0.25 & 0.00 & 0.33 & 0.41 & 0.07 & 0.24 & 0.29 & $2.9e^{-15}${\tiny (+)} \\
BARO & 0.07 & \underline{0.87} & \underline{0.71} & 0.13 & \textbf{0.80} & \underline{0.68} & \underline{0.80} & \textbf{1.00} & \underline{0.96} & \underline{0.47} & \underline{0.87} & \underline{0.80} & \underline{0.53} & \textbf{1.00} & \underline{0.88} & 0.07 & \underline{0.87} & \underline{0.68} & \underline{0.34} & \underline{0.90} & \underline{0.78} & $6.5e^{-12}${\tiny (+)} \\
\midrule
\textbf{\methodname} & \textbf{0.87} & \textbf{1.00} & \textbf{0.97} & \textbf{0.80} & \textbf{0.80} & \textbf{0.88} & \textbf{1.00} & \textbf{1.00} & \textbf{1.00} & \textbf{0.80} & \textbf{0.93} & \textbf{0.92} & \textbf{1.00} & \textbf{1.00} & \textbf{1.00} & \textbf{0.93} & \textbf{1.00} & \textbf{0.99} & \textbf{0.90} & \textbf{0.96} & \textbf{0.96} & \textbf{N/A} \\
\bottomrule
\end{tabular}

}%
\vspace{-10pt}
\end{table*}

\mypara{Overall Performance.}
Table~\ref{tab:rca_rcaeval} summarizes the overall performance across all 735 failure cases over nine datasets in the RCAEval benchmark. \methodname{} achieves $68\%$ Top-1 accuracy, substantially outperforming the second-best method BARO ($19\%$), a relative improvement of $258\%$ (from $19\%$ to $68\%$). The improvement persists across all metrics. \methodname{} achieves $91\%$ Top-3 and $87\%$ Avg@5, compared to $74\%$ and $63\%$ for BARO respectively. All improvements are statistically significant. These results demonstrate that \methodname{} can reliably identify the root cause as its top prediction in the majority of cases, whereas existing methods require examining multiple candidates.

\begin{tcolorbox}[colormetabox]
\textbf{Finding 1:} \methodname{} consistently outperforms all baselines, achieves \textbf{68\% Top-1 accuracy} overall, outperforming the best baseline by 258\%. The internal-external decomposition provides a fundamentally more effective signal than marginal anomaly scoring. 
\end{tcolorbox}

\mypara{Performance by Fault Type.}
Table~\ref{tab:rca_re2ob} presents the detailed results across different fault types in the RE2OB dataset. We observe that \methodname{} excels across different fault types. For resource faults, \methodname{} achieves perfect accuracy of $100\%$ Top-3 across CPU, MEM, DISK, and SOCKET faults. These faults directly corrupt internal properties (resource metrics), which \methodname{}'s internal-external decomposition is designed to detect. Network faults (DELAY, LOSS) are more challenging. \methodname{} achieves $80\%$--$93\%$ Top-3 accuracy respectively, yet still substantially outperforms all other baselines. The lower accuracy on DELAY faults reflects the inherent difficulty in diagnosing failures as network delays may not immediately manifest in internal resource metrics.

\begin{tcolorbox}[colormetabox]
\textbf{Finding 2:}
\methodname{} achieves $100\%$ Top-3 accuracy on resource faults (CPU, MEM, DISK, SOCKET) where faults directly corrupt internal properties. Network faults (DELAY, LOSS) achieve $80$--$93\%$ Top-3 accuracy as they may not immediately manifest in resource metrics. \methodname{} outperforms all baselines in overall.
\end{tcolorbox}

\mypara{Why Do Baselines Fail?}
Existing methods fail due to fundamental limitations. IT-score-based methods (Score-Ordering, Smooth-Traversal) suffer from \emph{saturation} (Section~\ref{subsec:it-saturation}). When multiple components exhibit anomalies exceeding historical bounds, they receive identical scores, preventing discrimination. Methods based on marginal score ordering (BARO, Cholesky) assume the root cause exhibits the highest anomaly score. Theorem~\ref{thm:counter} shows this assumption fails in fan-in scenarios where downstream effects exhibit larger anomalies than the root cause. BARO's strong Top-3 ($74\%$) but weak Top-1 ($19\%$) confirms this pattern. The root cause is often among top candidates but not ranked first. Graph-based methods (PC-PageRank) achieve only $9\%$ Top-1 accuracy, reflecting the difficulty of accurate causal discovery from observational data during failures~\cite{pham2024root}. RCLAgent, despite leveraging topology knowledge, achieves only $10\%$ Top-1. The bounded recursive exploration cannot systematically analyze the volume of trace data required for reliable diagnosis.

\begin{tcolorbox}[colormetabox]
\textbf{Finding 3:}
Existing methods fail due to (1) IT-score saturation violating injectivity, (2) invalid assumption that root causes exhibit highest anomaly scores, or (3) inability to process observational data at scale. 
\end{tcolorbox}

\mypara{Failure Case Analysis.}
We analyze a disk fault in \service{currencyservice} of the Online Boutique system to illustrate how score-based methods can misrank propagated effects above true root causes. Table~\ref{tab:case-study} shows BARO's top-4 ranked candidates. The highest-scoring candidate is \service{emailservice}-\external{latency}, an external property of a downstream service, with score $4.6 \times 10^{14}$. The true root cause (\service{currencyservice}) appears at both rank 2 (\internal{diskio}) and rank 4 (\external{latency}), exhibiting anomalies in both internal and external properties.~Notably, the \service{emailservice} score dominates others substantially.

\begin{table}[htbp]
\vspace{-2pt}
\centering
\caption{BARO's root cause ranking for a disk fault in \service{currencyservice}. The true root cause appears at ranks 2 and 4.} \label{tab:case-study}
\vspace{-3pt}
\resizebox{0.75\columnwidth}{!}{%
\begin{tabular}{clr}
\toprule
Rank & Root Cause Candidate & Anomaly Score \\
\midrule
1 & \service{emailservice}-\external{latency} & $4.6 \times 10^{14}$ \\
2 & \service{currencyservice}-\internal{diskio} & $5.1 \times 10^{9}$ \\
3 & \service{redis}-\internal{diskio} & $5.0 \times 10^{4}$ \\
4 & \service{currencyservice}-\external{latency} & $353.6$ \\
\bottomrule
\end{tabular}
}%
\vspace{-7pt}
\end{table}

This analysis illustrates two key insights. First, \emph{the highest anomaly score does not indicate the root cause}, i.e., the top candidate may be a propagated effect, not the root cause. Second, \emph{internal-external decomposition distinguishes root causes from affected components}, as true root causes exhibit anomalies in both property types, while downstream services show only external anomalies. \methodname{} leverages this distinction. It ranks \service{currencyservice} first with score $141.1$, while \service{emailservice}, despite its large external anomaly, receives a low score due to the absence of internal deviation.

\vspace{-5pt}
\subsection{Efficiency in RCA}\label{subsec:efficiency}
\vspace{-5pt}
To answer RQ2, we measure the running time of each RCA method across all 735 failure cases. In real-world environments, RCA must complete quickly to enable timely response. Figure~\ref{fig:efficiency-boxplot} shows the runtime distribution per case.

\mypara{Runtime Performance.}
\methodname{} achieves a median inference time of $8.2$ milliseconds per case, making it the second-fastest after BARO ($6.3$ms). Both methods complete most diagnoses in under $15$ milliseconds. In contrast, methods requiring causal graph construction or LLM inference incur substantially higher latency. PC-PageRank has a median of $1.2$ seconds due to the computational cost of causal discovery. RCLAgent requires $79$ seconds per case for LLM inference. Compared to these, \methodname{} is $146\times$ faster than PC-PageRank and $9{,}600\times$ faster than RCLAgent.

\begin{figure}
\centering
\includegraphics[width=\columnwidth]{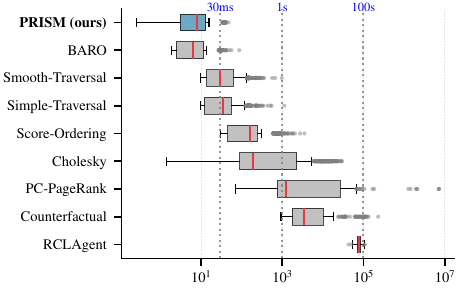}
\vspace{-15pt}
\caption{Running time per failure case (log scale). \methodname{} (8.2ms median) matches the efficiency of the fastest baseline (BARO). 
} \label{fig:efficiency-boxplot}
\vspace{-20pt}
\end{figure}


\mypara{Efficiency-Effectiveness Tradeoff.}
The efficiency results reveal a favorable tradeoff for \methodname{}. Compared to BARO, \methodname{} provides $258\%$ better Top-1 accuracy (Section~\ref{subsec:effectiveness}) while incurring only $30\%$ additional runtime, a negligible cost of $1.9$ milliseconds. Cholesky and Counterfactual achieve comparable effectiveness to BARO but require significantly longer execution times ($189$ms and $3.5$s median, respectively). Despite this additional computational cost, both methods still underperform \methodname{}. 

\begin{tcolorbox}[colormetabox]
\textbf{Finding 4:} \methodname{} requires only $8.2$ms per diagnosis, incurring just $30\%$ overhead compared to the fastest baseline (BARO). This modest overhead yields $258\%$ improvement in Top-1 accuracy.
\end{tcolorbox}

\subsection{Robustness Analysis}\label{subsec:robustness}
We analyze the robustness of \methodname{} in two parts. First, we examine its robustness w.r.t different design choices. Second, we examine its sensitivity w.r.t different input lengths.

\vspace{-5pt}
\mypara{Different Anomaly Scorers.}
We evaluate four anomaly scorers: z-score and IQR-based score (deviation-based, satisfying Theorem~\ref{thm:deviation-conditions}), and their IT-wrapped variants (IT-zscore, IT-IQR). Table~\ref{tab:robustness} shows that deviation-based scorers achieve $65$--$68\%$ Top-1 accuracy, while IT-based variants degrade to $13$--$14\%$. This performance gap confirms our theoretical analysis. IT-scores violate injectivity (Theorem~\ref{thm:rca-necessity}) due to saturation (Section~\ref{subsec:it-saturation}), assigning identical scores to components with vastly different anomaly magnitudes.

\begin{tcolorbox}[colormetabox]
\textbf{Finding 5:} Deviation-based anomaly scorers (e.g., z-score, IQR-based score) performs significantly better than IT-based scorers, confirming that injectivity is necessary for reliable RCA (Theorem~\ref{thm:rca-necessity}).
\end{tcolorbox}

\mypara{Different Pooling Strategies.}
We evaluate three pooling functions satisfying Definition~\ref{def:pooling}: $\max$ (worst deviation), $\text{sum}$ (total anomaly evidence), and $\text{mean}$ (average deviation). Table~\ref{tab:robustness} shows all three achieve comparable performance ($68$--$69\%$ Top-1). This robustness follows from the theoretical guarantees. Any monotonic, permutation-invariant aggregator preserves the ranking semantics for RCA.

\begin{tcolorbox}[colormetabox]
\textbf{Finding 6:} \methodname{} is robust to the three tested pooling functions. Common monotonic, permutation-invariant aggregators ($\max$, $\text{sum}$, $\text{mean}$) yield comparable performance, confirming \methodname{}'s robustness. 
\end{tcolorbox}

\begin{table}[h]
\vspace{-5pt}
\centering
\caption{Robustness of \methodname{} w.r.t Different Design Choices}
\vspace{-5pt}
\label{tab:robustness}
\resizebox{\columnwidth}{!}{%
\begin{tabular}{llccccc}
\toprule
Category & Variant & Top-1 & Top-3 & Top-5 & Avg@5 & Time (ms) \\
\midrule
\multirow{4}{*}{\shortstack{\textit{Anomaly}\\ \textit{Scoring}}}
  & zscore$^\dagger$ & \textbf{0.68} & \textbf{0.91} & \textbf{0.98} & \textbf{0.87} & 12.6 \\
  & IQR & \underline{0.65} & \underline{0.80} & \underline{0.85} & \underline{0.78} & 16.2 \\
  & IT-zscore & 0.14 & 0.51 & 0.75 & 0.48 & 35.9 \\
  & IT-IQR & 0.13 & 0.48 & 0.72 & 0.45 & 35.7 \\
\midrule
\multirow{3}{*}{\shortstack{\textit{Score}\\ \textit{Pooling}}}
  & max$^\dagger$ & \textbf{0.68} & \textbf{0.91} & \textbf{0.98} & \textbf{0.87} & 12.6 \\
  & sum & \textbf{0.69} & \textbf{0.93} & \textbf{0.98} & \textbf{0.89} & 13.2 \\
  & mean & \textbf{0.68} & \textbf{0.92} & \textbf{0.98} & \textbf{0.88} & 12.8 \\
\midrule
\multirow{2}{*}{\shortstack{\textit{Root Cause}\\ \textit{Scoring}}}
  & conj & \underline{0.66} & \underline{0.80} & \underline{0.86} & \underline{0.77} & 12.6 \\
  & additive$^\dagger$ & \textbf{0.68} & \textbf{0.91} & \textbf{0.98} & \textbf{0.87} & 12.4 \\
\bottomrule
\multicolumn{7}{l}{\footnotesize $^\dagger$Default configuration (zscore+max+additive). This table reports average time, \textit{not} median.}
\end{tabular}
}%
\vspace{-5pt}
\end{table}

\mypara{Root Cause Scoring.}
We compare the additive score $M_{\text{add}}$ (default) and conjunctive score $M_{\text{conj}}$ (Equations~\ref{eq:add-score}--\ref{eq:conj-score}). Table~\ref{tab:robustness} shows both achieve strong performance. Additive scoring outperforms conjunctive scoring ($68\%$ vs.\ $66\%$ Top-1, $91\%$ vs.\ $80\%$ Top-3). The additive score's advantage emerges when internal anomalies are weak (violating the strong form of Assumption~\ref{asm:sufficiency}), where accumulating external evidence compensates for limited internal signal. The conjunctive score is more conservative but provably robust to arbitrary external amplification (Proposition~\ref{prop:conj-valid}).

\begin{tcolorbox}[colormetabox]
\textbf{Finding 7:} Both scoring functions achieve comparable Top-1 accuracy ($66$--$68\%$). Additive scoring shows higher Top-3/Top-5 accuracy ($91\%$ vs.\ $80\%$ Top-3, $98\%$ vs.\ $86\%$ Top-5), particularly when internal anomalies are weak. Conjunctive scoring provides unconditional theoretical guarantees (Proposition~\ref{prop:conj-valid}).
\end{tcolorbox}
  
\mypara{Sensitivity to Data Length.}
After a failure is detected, it is critical to perform RCA rapidly, and execute proper resolving actions. \textit{Can \methodname{} diagnose the root cause immediately upon failure detection, or must it wait for extended post-fault observations?} We vary the percentage of post-fault data from $10\%$ to $100\%$ (taking the first $k\%$ of post-fault observations temporally). Figure~\ref{fig:robustness-data-lengths} reports results, compared with BARO, the second-best RCA baseline.

\methodname{} achieves its highest accuracy with early observations, reaching $81\%$ Top-1 at $10\%$ data, then decreasing to $68\%$ with full data. This pattern suggests that diagnosing immediately upon failure detection yields optimal results. The internal-external decomposition captures a diagnostic signal that emerges early. The root cause exhibits anomalies in \emph{both} internal and external properties from the onset, while affected components show only external anomalies. This distinction, formalized in Lemma~\ref{lem:internal-separation}, is clearest before fault propagation accumulates noise in the observations.

BARO exhibits more severe degradation, with Top-1 accuracy decreasing from $46\%$ at $10\%$ data to $19\%$ at full data. This occurs because fault propagation causes downstream components to accumulate increasingly severe symptoms over time. Eventually, downstream effects exhibit larger anomalies than the root cause (see Theorem~\ref{thm:counter}). \methodname{} is more resilient to this accumulation. While external anomalies grow in affected components, internal scores remain nominal per Axiom~\ref{ax:isolation}. The degradation in \methodname{} arises because accumulated noise affects the estimated anomaly scores, though the internal-external separation ensures \methodname{} degrades relatively slower than BARO.

\begin{figure}
\vspace{-5pt}
\centering
\includegraphics[width=\columnwidth]{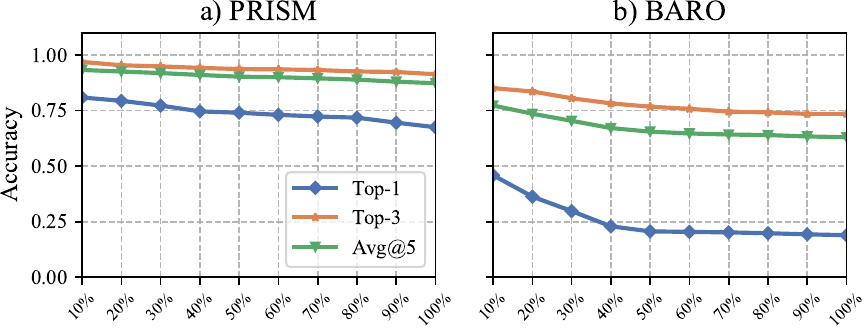}
\vspace{-15pt}
\caption{Sensitivity w.r.t.\ different post-fault data length.} \label{fig:robustness-data-lengths}
\vspace{-20pt}
\end{figure}

\begin{tcolorbox}[colormetabox]
\textbf{Finding 8:} \methodname{} achieves highest accuracy with early observations ($81\%$ Top-1 at $10\%$ data), enabling immediate diagnosis. While both methods degrade with fault propagation, \methodname{} remains $3.6\times$ more accurate than BARO at full data ($68\%$ vs.\ $19\%$).
\end{tcolorbox}

\subsection{Ablation Study}\label{subsec:ablation}

To answer RQ4, we ablate \methodname{} to verify that the internal-external decomposition is essential for effective RCA.  We evaluate four variants: $\text{\methodname}_\text{Marginal}$ ranks by highest marginal anomaly score, $\text{\methodname}_\text{Internal}$ ranks by internal scores alone, $\text{\methodname}_\text{External}$ ranks by external scores alone, and full \methodname{}. Table~\ref{tab:ablation} presents the results.

$\text{\methodname}_\text{Marginal}$ achieves only $19\%$ Top-1 accuracy while full \methodname{} achieves $68\%$, a relative improvement of over $3\times$. As demonstrated in the case study (Table~\ref{tab:case-study}), the root cause is often \emph{not} the component with the highest anomaly score. The internal-external decomposition enables \methodname{} to distinguish root causes from downstream effects.

$\text{\methodname}_\text{Internal}$ achieves the lowest Top-1 accuracy at $15\%$. This confirms the limitation identified in Section~\ref{sec:ranking}, i.e., ranking by internal scores alone requires a strong form of Assumption~\ref{asm:sufficiency} where faults manifest with sufficient magnitude in internal properties. In practice, this often fails. A code-level bug may not produce detectable spikes in resource metrics. When internal anomalies are weak, $\text{\methodname}_\text{Internal}$ cannot reliably identify the root cause.

\begin{table}[h]
\vspace{-5pt}
\centering
\caption{Ablation Study over 735 Failure Cases} \label{tab:ablation}
\vspace{-5pt}
\resizebox{0.95\columnwidth}{!}{%
\begin{tabular}{lccccc}
\toprule
Method & Top-1 & Top-3 & Top-5 & Avg@5 & Time (ms) \\
\midrule
$\text{\methodname}_\text{Marginal}$ & 0.19 {\scriptsize (+)} & 0.74 {\scriptsize (+)} & 0.85 {\scriptsize (+)} & 0.63 {\scriptsize (+)} & 10.6 \\
$\text{\methodname}_\text{Internal}$ & 0.15 {\scriptsize (+)} & 0.65 {\scriptsize (+)} & 0.77 {\scriptsize (+)} & 0.56 {\scriptsize (+)} & 11.1 \\
$\text{\methodname}_\text{External}$ & \underline{0.54} {\scriptsize (+)} & \underline{0.81} {\scriptsize (+)} & \underline{0.89} {\scriptsize (+)} & \underline{0.77} {\scriptsize (+)} & 11.9 \\
\textbf{\methodname{}} & \textbf{0.68} & \textbf{0.91} & \textbf{0.98} & \textbf{0.87} & 12.6 \\
\bottomrule
\multicolumn{6}{l}{\footnotesize (+) Significantly outperforms ($p<0.05$). This talbe reports average time, \textit{not} median.}
\end{tabular}
}%
\vspace{-5pt}
\end{table}

$\text{\methodname}_\text{External}$ achieves $54\%$ Top-1 accuracy, substantially better than internal-only, but still $21\%$ worse than full \methodname{}. The limitation is fundamental. Fault propagation may cause affected components to exhibit high external anomalies (e.g., elevated latency due to upstream failures). As shown in Theorem~\ref{thm:counter}, downstream effects can exhibit larger anomalies than the root cause, making external scores alone insufficient to distinguish cause from effect.

Full \methodname{} achieves the best performance by leveraging \emph{both} internal and external anomalies. This combination provides a discriminative criterion that the root cause exhibits anomalies in both property types, while affected components show only external anomalies (Lemma~\ref{lem:internal-separation}). This separation ensures that \methodname{} correctly rank the root cause above affected components, as proven in Theorem~\ref{thm:prism-correct}.

\begin{tcolorbox}[colormetabox]
\textbf{Finding 9:} The internal-external decomposition is necessary for effective RCA. Internal scores alone fail when fault manifestation is weak. External scores alone fail due to fault propagation. \methodname{} combines both to achieve $68\%$ Top-1, validating Theorem~\ref{thm:prism-correct}.
\end{tcolorbox}

\section{Conclusion}

We addressed the question of how to find root causes when the dependency graph is unknown. While existing methods rank root cause candidates by marginal anomaly scores, we showed that they fail when downstream effects accumulate. We proposed \methodname{}, a simple yet principled RCA framework that requires evidence from both internal and external properties to distinguish root causes from downstream effects without constructing dependency graphs. We formalized the Component-Property Model and established theoretical guarantees for \methodname{}'s correctness. We showed that deviation-based anomaly scorers satisfy conditions (monotonicity and injectivity) necessary for reliable RCA. Empirically, \methodname{} achieves 68\% Top-1 accuracy on 735 failure cases across nine datasets, a 258\% improvement over the best baseline. \methodname{} runs in 8 milliseconds per case, enabling real-time diagnosis in production systems.

\section*{Impact Statement}

This paper presents work whose goal is to advance the field of Machine Learning. There are many potential societal consequences of our work, none which we feel must be specifically highlighted here.

\balance
\bibliography{reference}
\bibliographystyle{icml2026}


\newpage
\appendix
\onecolumn

\section{Proofs}\label{app:proofs}

\begin{proof}[Proof of Theorem~\ref{thm:deviation-conditions} (Conditions Satisfied by Deviation-Based Scores)]
Let $S(x) = |x - c| / s$ with $s > 0$.

\textbf{Monotonicity:} If $|x - c| > |y - c|$, then dividing both sides by $s > 0$ preserves the inequality: $|x-c|/s > |y-c|/s$, hence $S(x) > S(y)$.

\textbf{Injectivity:} Follows directly from monotonicity by Lemma~\ref{lem:mono-inject}.
\end{proof}

\begin{proof}[Proof of Theorem~\ref{thm:rca-necessity} (Injectivity is Necessary)]
Suppose $S(x_i) = S(x_j)$ for observations from distinct components $i \neq j$, despite $|x_i - c| \neq |x_j - c|$. Then no ranking criterion based solely on $S$ can distinguish between components $i$ and $j$. If the true root cause is component $i$ (or $j$), any tie-breaking rule assigns rank 1 to the root cause with probability at most $1/2$ (assuming uniform random tie-breaking among tied components). More generally, if $m$ components share the maximum score, RCA succeeds with probability at most $1/m$, violating the requirement that the root cause be ranked first.
\end{proof}

\begin{proof}[Proof of Proposition~\ref{prop:add-valid} (Additive Score Validity)]

\textbf{Monotonicity.} For $M_{\text{add}}(s^I, s^E) = s^I + s^E - \log(1 + s^I + s^E)$, we have:
\[
\frac{\partial M_{\text{add}}}{\partial s^I} = 1 - \frac{1}{1+s^I+s^E} = \frac{s^I+s^E}{1+s^I+s^E} \geq 0
\]
with strict inequality when $s^I + s^E > 0$. By symmetry, $\frac{\partial M_{\text{add}}}{\partial s^E} \geq 0$ under the same condition. Thus $M_{\text{add}}$ is monotonically non-decreasing in both arguments.

\textbf{Internal boundedness.} Assume bounded amplification: $s^E \leq \alpha s^I + \beta$ for constants $\alpha, \beta \geq 0$. Substituting and using $\log(1+x) \geq 0$:
\begin{align*}
M_{\text{add}}(s^I, s^E) &= s^I + s^E - \log(1 + s^I + s^E) \\
&\leq s^I + (\alpha s^I + \beta) - \log(1 + (1+\alpha)s^I + \beta) \\
&\leq (1+\alpha)s^I + \beta
\end{align*}
Define $f(s) = (1+\alpha)s + \beta$, which is monotonically increasing. When $s^I$ is at nominal levels, $f(s^I)$ remains bounded by a small constant, ensuring affected components cannot achieve arbitrarily high scores regardless of external amplification. The root cause, with elevated $s^I$, achieves $M_{\text{add}} = \Theta(s^I)$, creating sufficient score separation for correct ranking (Theorem~\ref{thm:prism-correct}).
\end{proof}

\begin{proof}[Proof of Proposition~\ref{prop:conj-valid} (Conjunctive Score Validity)]
Monotonicity follows from the monotonicity of $\min$. Internal boundedness holds since $\min(s^I, s^E) \leq s^I$ for all $s^I, s^E$.
\end{proof}

\begin{proof}[Proof of Lemma~\ref{lem:internal-separation}]

\textbf{Part (1): Root cause has elevated internal score.}
By definition, $S^I(C_r) = \phi(\{S(P) : P \in \mathbf{I}_r\})$. Since $I_r^k$ is anomalous, $S(I_r^k) > 0$. For $\max$ and $\mathrm{sum}$ pooling, $\phi(X) \geq x$ for all $x \in X$ when elements are non-negative, hence $S^I(C_r) \geq S(I_r^k) > 0$. For $\mathrm{mean}$ pooling, the weaker bound $S^I(C_r) > 0$ holds.

\textbf{Part (2): Affected components have nominal internal scores.}
By Axiom~\ref{ax:isolation} (Inter-Component Isolation), there are no causal edges from any property of $C_r$ to internal properties of $C_a$. Therefore, the fault in $C_r$ cannot affect the internal properties of $C_a$, which remain drawn from the pre-fault distribution. Since anomaly scores are computed relative to this pre-fault distribution, $S^I(C_a)$ remains at nominal levels. Formally, let $\epsilon > 0$ be a threshold such that scores below $\epsilon$ are consistent with normal variation (e.g., the $(1-\alpha)$-quantile of the pre-fault score distribution). Then $S^I(C_a) \leq \epsilon$ with high probability.

\textbf{Part (3): Score separation.}
By Assumption~\ref{asm:sufficiency} (Diagnostic Sufficiency), the fault manifests as a detectable deviation. This means $S(I_r^k) > \epsilon$, i.e., the anomalous internal property exceeds the nominal threshold. Combined with Parts (1) and (2):
\[
S^I(C_r) \geq S(I_r^k) > \epsilon \geq S^I(C_a)
\]
Thus $S^I(C_r) > S^I(C_a)$ for any affected component $C_a$.
\end{proof}

\begin{remark}[Sufficient Conditions for Theorem~\ref{thm:prism-correct}]
To ensure $M(C_r) > M(C_a)$, the root cause must exhibit anomalies that exceed a nominal threshold $\epsilon$ in both internal and external properties. Formally, this requires $S^I(C_r) > \epsilon$ and $S^E(C_r) > \epsilon$, where $\epsilon$ is defined as the threshold above which anomaly scores are deemed ``detectable'' (e.g., the $(1-\alpha)$-quantile of the pre-fault score distribution). This interpretation is consistent with Assumption~\ref{asm:sufficiency} (Diagnostic Sufficiency), which posits that the root cause exhibits detectable anomalies. The threshold $\epsilon$ makes this notion concrete. In practice, $\epsilon$ can be chosen using standard statistical methods (e.g., $\epsilon = 3$ for z-scores, or $\epsilon = 1.5 \times \mathrm{IQR}$ for IQR-based scores).
\end{remark}

\begin{proof}[Proof of Theorem~\ref{thm:prism-correct} (\methodname{} Ranking Correctness)]
Let $M$ be a scoring function satisfying monotonicity and internal boundedness (Definition~\ref{def:internal-bound}) with bounding function $f$. Let $C_r$ be the root cause and $C_a$ be any affected component. Let $\epsilon > 0$ denote the nominal anomaly threshold. We proceed in three steps.

\textbf{Step 1: Interpret ``anomalies in both properties''.}
The hypothesis that the root cause exhibits ``anomalies in both internal and external properties'' means that its observed scores exceed the nominal threshold: $s_r^I = S^I(C_r) > \epsilon$ and $s_r^E = S^E(C_r) > \epsilon$. This is consistent with Assumption~\ref{asm:sufficiency} (Diagnostic Sufficiency), which requires that faults manifest as detectable deviations. The threshold $\epsilon$ operationalizes this detectability requirement.

By Lemma~\ref{lem:internal-separation}(1), the root cause has $s_r^I \geq S(I_r^k) > 0$ for some anomalous internal property $I_r^k$. By Assumption~\ref{asm:sufficiency}, this manifestation exceeds the nominal threshold: $s_r^I > \epsilon$.

For the affected component $C_a$, by Axiom~\ref{ax:isolation}, faults in the root cause cannot directly affect internal properties of $C_a$. Hence by Lemma~\ref{lem:internal-separation}(2), $s_a^I = S^I(C_a) \leq \epsilon$ (the internal score remains at nominal levels). By Lemma~\ref{lem:internal-separation}(3), $s_r^I > s_a^I$.

\textbf{Step 2: Bound the affected component's score.}
By internal boundedness (Definition~\ref{def:internal-bound}), there exists a monotonically increasing function $f: \mathbb{R}_{\geq 0} \to \mathbb{R}_{\geq 0}$ such that $M(s^I, s^E) \leq f(s^I)$ for all $s^I, s^E \geq 0$. For the affected component:
\[
M(C_a) = M(s_a^I, s_a^E) \leq f(s_a^I) \leq f(\epsilon)
\]
The second inequality follows from monotonicity of $f$ and $s_a^I \leq \epsilon$.

\textbf{Step 3: Establish ranking for both scorers.}

\textbf{Case 1 (Conjunctive Score):} For $M_{\text{conj}}(s^I, s^E) = \min(s^I, s^E)$ with bounding function $f(s) = s$ (Proposition~\ref{prop:conj-valid}):
\[
M_{\text{conj}}(C_r) = \min(s_r^I, s_r^E) > \min(\epsilon, \epsilon) = \epsilon
\]
The inequality holds because both $s_r^I > \epsilon$ and $s_r^E > \epsilon$ by Step 1. Combined with Step 2, $M_{\text{conj}}(C_r) > \epsilon = f(\epsilon) \geq M_{\text{conj}}(C_a)$, yielding $M_{\text{conj}}(C_r) > M_{\text{conj}}(C_a)$.

\textbf{Case 2 (Additive Score):} For $M_{\text{add}}(s^I, s^E) = s^I + s^E - \log(1 + s^I + s^E)$, we compute:
\[
M_{\text{add}}(C_r) = s_r^I + s_r^E - \log(1 + s_r^I + s_r^E)
\]
Since $s_r^I > \epsilon$ and $s_r^E > \epsilon$, we have $s_r^I + s_r^E > 2\epsilon$. The logarithmic term satisfies $\log(1 + s_r^I + s_r^E) < s_r^I + s_r^E$ for all positive arguments (standard inequality). Under bounded external amplification (Proposition~\ref{prop:add-valid}), the bounding function is $f(s) = (1+\alpha)s + \beta$. Since $s_r^I > \epsilon$, we have:
\[
M_{\text{add}}(C_r) \geq s_r^I > \epsilon \geq \frac{f(\epsilon)}{(1+\alpha)} \text{ for } \alpha \geq 0
\]
More directly, $M_{\text{add}}(C_r) = s_r^I + s_r^E - \log(1 + s_r^I + s_r^E) > s_r^I + \epsilon - \log(1 + s_r^I + \epsilon)$. When $s_r^I$ is above the nominal level $\epsilon$, this exceeds $f(\epsilon) = (1+\alpha)\epsilon + \beta$. Thus $M_{\text{add}}(C_r) > f(\epsilon) \geq M_{\text{add}}(C_a)$, yielding $M_{\text{add}}(C_r) > M_{\text{add}}(C_a)$.

In both cases, $M(C_r) > M(C_a)$, as required.
\end{proof}

\begin{corollary}\label{cor:ranking-margin}
Under the conditions of Theorem~\ref{thm:prism-correct}, for the conjunctive score $M_{\text{conj}} = \min(S^I, S^E)$:
\[
M_{\text{conj}}(C_r) - M_{\text{conj}}(C_a) \geq \min(s_r^I, s_r^E) - \epsilon > 0
\]
where $s_r^I = S^I(C_r)$ and $s_r^E = S^E(C_r)$. This provides a quantitative margin for correct ranking that grows with the severity of the root cause anomaly.
\end{corollary}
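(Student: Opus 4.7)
The plan is to treat the corollary as a quantitative refinement of the conjunctive case of Theorem~\ref{thm:prism-correct}. The proof of that theorem already shows $M_{\text{conj}}(C_r) > M_{\text{conj}}(C_a)$ by sandwiching both values against $\epsilon$; here I just need to avoid collapsing those two bounds into a single strict inequality, and instead retain the explicit gap.

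Concretely, I would proceed in two short steps. First, invoke Step~1 of the proof of Theorem~\ref{thm:prism-correct}, which establishes that under the hypotheses of that theorem both $s_r^I > \epsilon$ and $s_r^E > \epsilon$. Since $M_{\text{conj}}$ is by definition $\min(S^I, S^E)$, this gives the exact identity $M_{\text{conj}}(C_r) = \min(s_r^I, s_r^E)$, with the strict bound $\min(s_r^I, s_r^E) > \epsilon$ following immediately because the minimum of two quantities each strictly exceeding $\epsilon$ strictly exceeds $\epsilon$.

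Second, upper-bound $M_{\text{conj}}(C_a)$. The trivial inequality $\min(s_a^I, s_a^E) \leq s_a^I$ combined with Lemma~\ref{lem:internal-separation}(2), which asserts $s_a^I \leq \epsilon$ for any affected component, yields $M_{\text{conj}}(C_a) \leq \epsilon$. Subtracting this from the identity for $M_{\text{conj}}(C_r)$ gives
\[
M_{\text{conj}}(C_r) - M_{\text{conj}}(C_a) \;\geq\; \min(s_r^I, s_r^E) - \epsilon,
\]
and the strict positivity of the right-hand side is exactly the consequence of Step~1 noted above.

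There is essentially no hard step; the only thing to be careful about is keeping the inequalities consistent (the root-cause bound is strict because fault manifestation is strict, while the affected-component bound is non-strict because ``nominal'' is defined by $s_a^I \leq \epsilon$), so that the final margin $\min(s_r^I, s_r^E) - \epsilon$ is genuinely positive rather than merely non-negative. The monotonic growth of the margin in the severity of the root-cause anomaly then follows directly, since both arguments of the $\min$ enter the bound linearly.
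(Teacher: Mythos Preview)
Your proposal is correct and follows essentially the same approach as the paper: both arguments establish $M_{\text{conj}}(C_r) = \min(s_r^I, s_r^E)$ exactly, bound $M_{\text{conj}}(C_a) \leq s_a^I \leq \epsilon$ via the nominal internal score of affected components, and then subtract, with strict positivity following from $s_r^I, s_r^E > \epsilon$. The only cosmetic difference is that you cite Lemma~\ref{lem:internal-separation}(2) directly for $s_a^I \leq \epsilon$, whereas the paper routes this through the proof of Theorem~\ref{thm:prism-correct}.
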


\begin{proof}
By Theorem~\ref{thm:prism-correct}, $M_{\text{conj}}(C_a) \leq s_a^I \leq \epsilon$. For the root cause, $M_{\text{conj}}(C_r) = \min(s_r^I, s_r^E)$. The margin follows directly:
\[
M_{\text{conj}}(C_r) - M_{\text{conj}}(C_a) \geq \min(s_r^I, s_r^E) - \epsilon
\]
As $s_r^I > \epsilon$, $s_r^E > \epsilon$ (hypothesis in Theorem~\ref{thm:prism-correct}), we have $\min(s_r^I, s_r^E) > \epsilon$, ensuring the margin is strictly positive.
\end{proof}

\begin{proof}[Proof of Lemma~\ref{lem:mono-inject} (Monotonicity implies Injectivity)]
Suppose $|x-c| \neq |y-c|$. Without loss of generality, assume $|x-c| > |y-c|$. By monotonicity, $S(x) > S(y)$, hence $S(x) \neq S(y)$. The converse fails: consider $S(x) = |x-c|$ for $x \geq c$ and $S(x) = 0$ for $x < c$; this is injective on $[c,\infty)$ but not monotonic globally.
\end{proof}

\begin{proof}[Proof of Theorem~\ref{thm:it-saturation} (IT-Score Saturation)]

\begin{enumerate}
    \item The count $|\{i : \tau(x_i) \geq \tau(x)\}|$ takes values in $\{0, 1, \ldots, k\}$. The minimum positive count is 1, giving maximum score $-\log(1/k) = \log k$.

    \item When $\tau(x) > \max_{i \in [k]} \tau(x_i)$, no reference observation has $\tau$-value at least as large as $\tau(x)$, so the count is 0. With Laplace smoothing (replacing 0 with 0.5 and $k$ with $k + 0.5$), we obtain:
    \begin{equation}
        \hat{S}_{\text{IT}}(x) = -\log \frac{0.5}{k + 0.5} \approx \log(2k)
    \end{equation}

    \item All observations exceeding the reference maximum yield count 0 (or 0.5 with smoothing). Hence, they receive identical IT-scores regardless of their actual $\tau$ values: whether $\tau(x^{(1)}) = 10$ or $\tau(x^{(2)}) = 1000$, both receive $\hat{S}_{\text{IT}} = \log(2k)$.
\end{enumerate}
\end{proof}

\begin{proof}[Proof of Theorem~\ref{thm:counter} (Counterexample to Marginal Score Ordering)]
Consider a caller-callee system with:
\begin{itemize}
    \item $C_{\text{callee}}$ (root cause): internal fault causes latency $L_{\text{callee}} = L_0 + \Delta$
    \item $C_{\text{caller}}$: makes $k$ sequential calls to callee, observing latency $L_{\text{caller}} = k \cdot L_{\text{callee}} = k(L_0 + \Delta)$
\end{itemize}

The external anomaly scores satisfy:
\[
    S^E(C_{\text{caller}}) = S(k(L_0 + \Delta)) > S(L_0 + \Delta) = S^E(C_{\text{callee}})
\]
for $k > 1$, since IT scores are monotonic in the feature value.

However, the internal scores satisfy:
\[
    S^I(C_{\text{callee}}) \text{ is high} \quad \text{(root cause has internal anomaly)}
\]
\[
    S^I(C_{\text{caller}}) \text{ is low} \quad \text{(caller's internal properties are normal)}
\]

Thus, marginal score ordering (as in~\citet{orchardroot}'s Score Ordering) would incorrectly identify $C_{\text{caller}}$ as the root cause. \methodname{} correctly identifies $C_{\text{callee}}$ because:
\[
    M_{\text{conj}}(C_{\text{callee}}) = \min(\text{high}, \text{high}) > \min(\text{low}, \text{high}) = M_{\text{conj}}(C_{\text{caller}})
\]
\end{proof}

\begin{proof}[Proof of Corollary~\ref{cor:deviation-rca} (Deviation-Based Scores Enable RCA)]
By Theorem~\ref{thm:deviation-conditions}, deviation-based scores are injective almost surely under continuous distributions. Combined with monotonicity, the component with the largest deviation receives the highest score and is ranked first.
\end{proof}

\begin{proof}[Proof of Corollary~\ref{cor:it-fails} (IT-Scores Fail for RCA Under Saturation)]
By Theorem~\ref{thm:it-saturation} and Corollary~\ref{cor:it-non-injective}, all components with $\tau$-values exceeding the reference maximum receive identical IT-scores. By Theorem~\ref{thm:rca-necessity}, non-injectivity implies RCA failure.
\end{proof}

\section{Related Work}\label{app:related}

\mypara{Causal Graph-Based RCA.}
A prevalent approach to RCA constructs causal or dependency graphs from observational data and traverses them to identify root causes. Microscope~\cite{lin2018microscope} constructs service causal graphs by analyzing metrics time series data. CloudRanger~\cite{wang2018cloudranger} uses Pearson correlation to build dependency graphs and employs second-order random walks for root cause localization. MicroRCA~\cite{wu2021microrca} constructs attributed graphs from metrics and traces. CausalRCA~\cite{xin2023causalrca} applies DAG-GNN to learn causal structures and uses PageRank for ranking. More recent work has explored alternative graph construction strategies. \citet{ikram2022root} propose a hierarchical approach that treats failures as interventions to reduce the number of conditional independence tests. CIRCA~\cite{li2022causal} formulates RCA as intervention recognition in a Causal Bayesian Network constructed from call graphs. RUN~\cite{LinCWWP24} employs neural Granger causal discovery with contrastive learning to capture temporal dependencies that PC-based methods miss. AERCA~\cite{han2025root} integrates Granger causal discovery with anomaly detection for multivariate time series. EventADL~\cite{pham2026eventadl} constructs an intervention graph from events data and performs time-aware random walk over the constructed graph to localize the root cause interventions from given detected anomalies~\cite{ferreira2025hypergraph}. These methods share a common pipeline: construct a causal graph using causal discovery methods or correlation-based approaches, then traverse or score nodes to identify root causes. However, as highlighted by~\citet{pham2024root}, these graph-based methods are often inefficient and ineffective. As shown in Section~\ref{subsec:effectiveness}, PC-PageRank achieves only $9\%$ Top-1 accuracy on the RCAEval benchmark. \methodname{} bypasses graph construction entirely, achieving $68\%$ Top-1 accuracy while running $150\times$ faster.

\mypara{Anomaly Score-Based RCA.}
An alternative approach ranks root cause candidates directly by anomaly scores without constructing causal graphs. \citet{budhathoki2022causal} formalize RCA as quantifying each variable's contribution to an outlier score using Shapley values, requiring a known causal graph and structural causal model. \citet{orchardroot} propose IT-based anomaly scoring with traversal methods that operate without structural knowledge, assuming the root cause exhibits the highest anomaly score. \citet{li2025root} exploit permutation invariance in linear SCMs to identify root causes via Cholesky decomposition. BARO~\cite{pham2024baro} combines Bayesian online change point detection with IQR-based scoring for robust anomaly quantification. TORAI~\cite{pham2026torai} computes the anomaly severity of each components, then cluster them into normal and abnormal groups for further causal inference-based RCA. \citet{nagalapattirobust} propose In-Distribution Interventions (IDI), which identifies root causes by testing both anomaly and fix conditions using interventional rather than counterfactual estimates, improving robustness when anomalies fall outside training distributions. From a theoretical perspective, \citet{ikram2025root} establish a connection between RCA and Interactive Graph Search, proving that any algorithm relying solely on marginal invariance tests requires at least $\Omega(\log n)$ tests. A common assumption across these methods is that the root cause exhibits the \textit{highest} anomaly score among all candidates. However, as we prove in Theorem~\ref{thm:counter}, this assumption fails when downstream effects accumulate, for example, when a callee is invoked multiple times, its small delay can amplify into a much larger delay at the caller. \methodname{} addresses this limitation by leveraging \textit{both} internal and external anomalies, distinguishing root causes from propagated effects even when external anomalies are amplified.

\mypara{LLM-Based RCA.}
Recent work explores large language models (LLMs) for RCA, particularly in cloud environments. RCLAgent~\cite{zhang2025adaptive}, which we use as a baseline, employs multi-agent recursion-of-thought to iteratively localize root causes using topology graphs and call graph information. Similarly, Stratus~\cite{chen2025stratus} proposes a multi-agent system for autonomous reliability engineering in modern clouds. At the industry scale, RCAgent~\cite{wang2024rcagent} is a tool-augmented LLM agent deployed at Alibaba Cloud that autonomously gathers evidence and reasons about root causes. Microsoft has deployed multiple LLM-based solutions. RCACopilot~\cite{chen2024rcacopilot} uses GPT-4 with retrieval-augmented generation for automated incident diagnosis, while \citet{ahmed2023recommending} focus on recommending both root causes and mitigation steps for Azure incidents. \citet{chen2024automatic} further demonstrate LLM effectiveness for cloud incident diagnosis through automated reasoning pipelines. The OpenRCA benchmark~\cite{xu2025openrca} provides a systematic evaluation framework with 335 failures and 68GB of telemetry data from enterprise systems, revealing that even with a specialized RCA-agent, Claude 3.5 solves only 11.34\% of failure cases. While these methods leverage rich contextual information from logs, traces, and documentation, they incur substantial latency. RCLAgent requires $79$ seconds per failure case compared to \methodname{}'s $8$ milliseconds, a $9{,}700\times$ difference. Furthermore, their effectiveness depends on call graphs and heterogeneous data sources that may not always be available in production systems. \methodname{} provides a lightweight alternative that operates solely on metrics data with theoretical guarantees.

\section{Additional Results}

\subsection{Effectiveness of \methodname{}} \label{appendix:rca-effectiveness}

This section presents detailed per-dataset results complementing the aggregate analysis in Section~\ref{subsec:effectiveness}. The following tables report performance across all fault types for each of the nine datasets in the RCAEval benchmark.

\subsubsection{Online Boutique (Unimodal)}

On RE1OB (Online Boutique), \methodname{} achieves $61\%$ Top-1 and $92\%$ Top-3 accuracy. Performance is strongest on resource faults, with MEM reaching $84\%$ Top-1 and DELAY reaching $76\%$ Top-1, where internal properties directly reflect the injected fault. Network faults prove more challenging. LOSS achieves only $28\%$ Top-1, as packet loss may not immediately manifest in internal resource metrics.
\begin{table*}[htbp]
\vspace{-5pt}
\centering
\caption{RCA method performance on RE1OB (Online Boutique with unimodal data).}
\vspace{-5pt}
\label{tab:re1ob}
\resizebox{\textwidth}{!}{%
\setlength{\tabcolsep}{2pt}
\begin{tabular}{lccccccccccccccccccc}
\toprule
Method & \multicolumn{3}{c}{CPU} & \multicolumn{3}{c}{DELAY} & \multicolumn{3}{c}{DISK} & \multicolumn{3}{c}{LOSS} & \multicolumn{3}{c}{MEM} & \multicolumn{3}{c}{AVERAGE} & \multicolumn{1}{c}{STAT} \\
\cline{2-4} \cline{5-7} \cline{8-10} \cline{11-13} \cline{14-16} \cline{17-19} \cline{20-20}
 & Top-1 & Top-3 & Avg@5 & Top-1 & Top-3 & Avg@5 & Top-1 & Top-3 & Avg@5 & Top-1 & Top-3 & Avg@5 & Top-1 & Top-3 & Avg@5 & Top-1 & Top-3 & Avg@5 & p-value \\
\midrule
Smooth-Traversal & 0.04 & 0.25 & 0.19 & 0.08 & 0.20 & 0.16 & 0.04 & 0.08 & 0.07 & 0.04 & 0.17 & 0.13 & 0.08 & 0.16 & 0.13 & 0.06 & 0.17 & 0.14 & $7.3e^{-20}${\tiny (+)} \\
Simple-Traversal & 0.08 & 0.12 & 0.11 & 0.08 & 0.24 & 0.22 & 0.04 & 0.20 & 0.16 & 0.08 & 0.17 & 0.16 & 0.08 & 0.16 & 0.13 & 0.07 & 0.18 & 0.16 & $9.4e^{-20}${\tiny (+)} \\
Counterfactual & 0.04 & 0.12 & 0.13 & 0.08 & 0.28 & 0.27 & 0.20 & 0.24 & 0.30 & 0.12 & 0.25 & 0.27 & 0.08 & 0.16 & 0.20 & 0.11 & 0.21 & 0.23 & $5.3e^{-19}${\tiny (+)} \\
Cholesky & 0.08 & 0.20 & 0.22 & 0.08 & 0.24 & 0.22 & 0.12 & 0.28 & 0.28 & \underline{0.16} & 0.32 & 0.34 & 0.12 & 0.16 & 0.24 & 0.11 & 0.24 & 0.26 & $2.6e^{-18}${\tiny (+)} \\
PC-PageRank & 0.00 & 0.12 & 0.13 & 0.08 & 0.16 & 0.19 & 0.16 & 0.44 & 0.42 & 0.04 & 0.36 & 0.31 & 0.08 & 0.24 & 0.26 & 0.07 & 0.26 & 0.26 & $6.0e^{-19}${\tiny (+)} \\
Score-Ordering & 0.08 & 0.42 & 0.34 & 0.12 & 0.32 & 0.33 & 0.12 & 0.32 & 0.36 & 0.08 & 0.29 & 0.37 & 0.08 & 0.24 & 0.27 & 0.10 & 0.32 & 0.33 & $3.7e^{-18}${\tiny (+)} \\
BARO & \underline{0.12} & \underline{0.75} & \underline{0.60} & \underline{0.24} & \underline{0.84} & \underline{0.73} & \underline{0.36} & \underline{0.96} & \underline{0.81} & 0.04 & \underline{0.48} & \underline{0.43} & \underline{0.32} & \underline{0.96} & \underline{0.82} & \underline{0.22} & \underline{0.80} & \underline{0.68} & $4.1e^{-13}${\tiny (+)} \\
\midrule
\textbf{\methodname} & \textbf{0.52} & \textbf{0.92} & \textbf{0.85} & \textbf{0.76} & \textbf{1.00} & \textbf{0.95} & \textbf{0.64} & \textbf{1.00} & \textbf{0.92} & \textbf{0.28} & \textbf{0.68} & \textbf{0.66} & \textbf{0.84} & \textbf{1.00} & \textbf{0.96} & \textbf{0.61} & \textbf{0.92} & \textbf{0.87} & \textbf{N/A} \\
\bottomrule
\end{tabular}
}%
\vspace{-5pt}
\end{table*}

\subsubsection{Sock Shop (Unimodal)}

On RE1SS (Sock Shop), \methodname{} demonstrates strong performance with $75\%$ Top-1 and $97\%$ Top-3 accuracy. The method achieves $84\%$ Top-1 on CPU faults and $80\%$ Top-1 on DELAY, DISK, and MEM faults. All improvements over baselines are statistically significant ($p < 0.05$).

\begin{table*}[htbp]
\vspace{-5pt}
\centering
\caption{RCA method performance on RE1SS (Sock Shop with unimodal data).}
\vspace{-5pt}
\label{tab:rca_re1ss}
\resizebox{\textwidth}{!}{%
\setlength{\tabcolsep}{2pt}
\begin{tabular}{lccccccccccccccccccc}
\toprule
Method & \multicolumn{3}{c}{CPU} & \multicolumn{3}{c}{DELAY} & \multicolumn{3}{c}{DISK} & \multicolumn{3}{c}{LOSS} & \multicolumn{3}{c}{MEM} & \multicolumn{3}{c}{AVERAGE} & \multicolumn{1}{c}{STAT} \\
\cline{2-4} \cline{5-7} \cline{8-10} \cline{11-13} \cline{14-16} \cline{17-19} \cline{20-20}
 & Top-1 & Top-3 & Avg@5 & Top-1 & Top-3 & Avg@5 & Top-1 & Top-3 & Avg@5 & Top-1 & Top-3 & Avg@5 & Top-1 & Top-3 & Avg@5 & Top-1 & Top-3 & Avg@5 & p-value \\
\midrule
Counterfactual & 0.16 & 0.24 & 0.22 & 0.08 & 0.12 & 0.12 & 0.16 & 0.24 & 0.21 & 0.04 & 0.04 & 0.04 & 0.16 & 0.16 & 0.18 & 0.12 & 0.16 & 0.15 & $7.8e^{-21}${\tiny (+)} \\
Smooth-Traversal & 0.12 & 0.24 & 0.22 & 0.00 & 0.24 & 0.22 & 0.04 & 0.16 & 0.14 & 0.00 & 0.16 & 0.11 & 0.04 & 0.20 & 0.17 & 0.04 & 0.20 & 0.17 & $6.9e^{-21}${\tiny (+)} \\
PC-PageRank & 0.20 & 0.24 & 0.27 & 0.20 & 0.28 & 0.25 & 0.20 & 0.40 & 0.32 & 0.24 & 0.44 & 0.39 & 0.04 & 0.24 & 0.18 & 0.18 & 0.32 & 0.28 & $1.1e^{-18}${\tiny (+)} \\
Cholesky & 0.08 & 0.20 & 0.23 & 0.08 & 0.20 & 0.21 & 0.04 & 0.32 & 0.34 & 0.12 & 0.36 & 0.40 & 0.12 & 0.40 & 0.37 & 0.09 & 0.30 & 0.31 & $1.1e^{-18}${\tiny (+)} \\
Simple-Traversal & 0.16 & 0.32 & 0.32 & 0.08 & 0.32 & 0.30 & 0.08 & 0.36 & 0.35 & 0.12 & 0.32 & 0.31 & 0.12 & 0.28 & 0.27 & 0.11 & 0.32 & 0.31 & $1.1e^{-18}${\tiny (+)} \\
Score-Ordering & 0.16 & \underline{0.40} & 0.36 & 0.20 & 0.48 & 0.42 & 0.04 & 0.36 & 0.38 & 0.12 & 0.32 & 0.35 & 0.16 & 0.32 & 0.34 & 0.14 & 0.38 & 0.37 & $3.0e^{-18}${\tiny (+)} \\
BARO & \underline{0.40} & \textbf{1.00} & \underline{0.86} & \underline{0.60} & \underline{0.92} & \underline{0.84} & \underline{0.40} & \underline{0.96} & \underline{0.84} & \underline{0.28} & \underline{0.76} & \underline{0.70} & \underline{0.56} & \underline{0.92} & \underline{0.86} & \underline{0.45} & \underline{0.91} & \underline{0.82} & $3.2e^{-6}${\tiny (+)} \\
\midrule
\textbf{\methodname} & \textbf{0.84} & \textbf{1.00} & \textbf{0.96} & \textbf{0.80} & \textbf{0.96} & \textbf{0.93} & \textbf{0.80} & \textbf{1.00} & \textbf{0.95} & \textbf{0.52} & \textbf{0.92} & \textbf{0.83} & \textbf{0.80} & \textbf{0.96} & \textbf{0.93} & \textbf{0.75} & \textbf{0.97} & \textbf{0.92} & \textbf{N/A} \\
\bottomrule
\end{tabular}
}%
\vspace{-5pt}
\end{table*}

\subsubsection{Train Ticket (Unimodal)}

On RE1TT (Train Ticket), the most complex system with 64 components, \methodname{} achieves $33\%$ Top-1 and $71\%$ Top-3 accuracy while all baselines remain below $10\%$ Top-1. The method achieves $68\%$ Top-1 on MEM faults and $36\%$ on CPU. The increased system complexity poses challenges for all methods, yet \methodname{} maintains substantial improvements.

\begin{table*}[htbp]
\vspace{-5pt}
\centering
\caption{RCA method performance on RE1TT (Train Ticket with unimodal data).}
\vspace{-5pt}
\label{tab:rca_re1tt}
\resizebox{\textwidth}{!}{%
\setlength{\tabcolsep}{2pt}
\begin{tabular}{lccccccccccccccccccc}
\toprule
Method & \multicolumn{3}{c}{CPU} & \multicolumn{3}{c}{DELAY} & \multicolumn{3}{c}{DISK} & \multicolumn{3}{c}{LOSS} & \multicolumn{3}{c}{MEM} & \multicolumn{3}{c}{AVERAGE} & \multicolumn{1}{c}{STAT} \\
\cline{2-4} \cline{5-7} \cline{8-10} \cline{11-13} \cline{14-16} \cline{17-19} \cline{20-20}
 & Top-1 & Top-3 & Avg@5 & Top-1 & Top-3 & Avg@5 & Top-1 & Top-3 & Avg@5 & Top-1 & Top-3 & Avg@5 & Top-1 & Top-3 & Avg@5 & Top-1 & Top-3 & Avg@5 & p-value \\
\midrule
PC-PageRank & 0.00 & 0.00 & 0.00 & 0.00 & 0.00 & 0.00 & 0.00 & 0.00 & 0.00 & \underline{0.00} & 0.00 & 0.00 & 0.00 & 0.00 & 0.02 & 0.00 & 0.00 & 0.00 & $1.2e^{-19}${\tiny (+)} \\
Counterfactual & 0.00 & 0.08 & 0.08 & 0.00 & 0.04 & 0.02 & 0.00 & 0.04 & 0.03 & \underline{0.00} & 0.00 & 0.00 & 0.00 & 0.04 & 0.02 & 0.00 & 0.04 & 0.03 & $3.6e^{-19}${\tiny (+)} \\
Simple-Traversal & 0.00 & 0.04 & 0.06 & 0.00 & 0.04 & 0.02 & 0.00 & 0.00 & 0.01 & \underline{0.00} & 0.00 & 0.03 & 0.00 & 0.04 & 0.02 & 0.00 & 0.02 & 0.03 & $3.6e^{-19}${\tiny (+)} \\
Smooth-Traversal & 0.00 & 0.00 & 0.01 & \underline{0.04} & 0.08 & 0.06 & 0.00 & 0.04 & 0.03 & \underline{0.00} & 0.04 & 0.03 & 0.00 & 0.00 & 0.02 & 0.01 & 0.03 & 0.03 & $4.3e^{-19}${\tiny (+)} \\
Cholesky & 0.04 & 0.12 & 0.12 & 0.00 & 0.00 & 0.00 & 0.00 & 0.00 & 0.05 & \underline{0.00} & 0.04 & 0.05 & 0.00 & 0.00 & 0.02 & 0.01 & 0.03 & 0.05 & $3.6e^{-19}${\tiny (+)} \\
Score-Ordering & 0.00 & 0.04 & 0.04 & \underline{0.04} & 0.04 & 0.07 & \underline{0.04} & 0.04 & \underline{0.09} & \underline{0.00} & \underline{0.08} & 0.06 & 0.04 & 0.08 & 0.08 & 0.02 & 0.06 & 0.07 & $4.3e^{-19}${\tiny (+)} \\
BARO & \underline{0.12} & \underline{0.52} & \underline{0.42} & \underline{0.04} & \underline{0.24} & \underline{0.24} & 0.00 & \underline{0.08} & 0.06 & \underline{0.00} & \underline{0.08} & \underline{0.11} & \underline{0.28} & \underline{0.68} & \underline{0.58} & \underline{0.09} & \underline{0.32} & \underline{0.28} & $1.3e^{-16}${\tiny (+)} \\
\midrule
\textbf{\methodname} & \textbf{0.36} & \textbf{0.80} & \textbf{0.73} & \textbf{0.24} & \textbf{0.68} & \textbf{0.61} & \textbf{0.12} & \textbf{0.64} & \textbf{0.59} & \textbf{0.24} & \textbf{0.52} & \textbf{0.54} & \textbf{0.68} & \textbf{0.92} & \textbf{0.89} & \textbf{0.33} & \textbf{0.71} & \textbf{0.67} & \textbf{N/A} \\
\bottomrule
\end{tabular}
}%
\vspace{-5pt}
\end{table*}

\subsubsection{Sock Shop (Multimodal)}

On RE2SS (Sock Shop with multimodal data), \methodname{} achieves $86\%$ Top-1 and $96\%$ Top-3 accuracy. Perfect accuracy is observed on MEM faults ($100\%$ Top-1), with strong performance on CPU ($87\%$), DISK ($93\%$), and LOSS ($80\%$) faults. The multimodal data provides richer internal-external signal separation.

\begin{table*}[htbp]
\vspace{-5pt}
\centering
\caption{RCA method performance on RE2SS (Sock Shop with multimodal data).}
\vspace{-5pt}
\label{tab:rca_re2ss}
\resizebox{\textwidth}{!}{%
\setlength{\tabcolsep}{2pt}
\begin{tabular}{lcccccccccccccccccccccc}
\toprule
Method & \multicolumn{3}{c}{CPU} & \multicolumn{3}{c}{DELAY} & \multicolumn{3}{c}{DISK} & \multicolumn{3}{c}{LOSS} & \multicolumn{3}{c}{MEM} & \multicolumn{3}{c}{SOCKET} & \multicolumn{3}{c}{AVERAGE} & \multicolumn{1}{c}{STAT} \\
\cline{2-4} \cline{5-7} \cline{8-10} \cline{11-13} \cline{14-16} \cline{17-19} \cline{20-22} \cline{23-23}
 & Top-1 & Top-3 & Avg@5 & Top-1 & Top-3 & Avg@5 & Top-1 & Top-3 & Avg@5 & Top-1 & Top-3 & Avg@5 & Top-1 & Top-3 & Avg@5 & Top-1 & Top-3 & Avg@5 & Top-1 & Top-3 & Avg@5 & p-value \\
\midrule
Counterfactual & 0.13 & 0.13 & 0.13 & 0.00 & 0.07 & 0.04 & 0.13 & 0.27 & 0.24 & \underline{0.27} & 0.27 & 0.29 & 0.07 & 0.07 & 0.07 & 0.07 & 0.07 & 0.12 & 0.11 & 0.14 & 0.15 & $1.3e^{-15}${\tiny (+)} \\
Smooth-Traversal & 0.20 & 0.27 & 0.24 & 0.00 & 0.13 & 0.08 & 0.00 & 0.27 & 0.21 & 0.00 & 0.20 & 0.12 & 0.13 & 0.27 & 0.21 & 0.13 & 0.27 & 0.28 & 0.08 & 0.23 & 0.19 & $2.5e^{-15}${\tiny (+)} \\
PC-PageRank & 0.20 & 0.33 & 0.29 & \underline{0.07} & 0.33 & 0.27 & 0.13 & 0.20 & 0.20 & 0.13 & 0.27 & 0.21 & 0.13 & 0.27 & 0.24 & \underline{0.33} & 0.40 & 0.41 & \underline{0.17} & 0.30 & 0.27 & $2.5e^{-14}${\tiny (+)} \\
Cholesky & 0.07 & 0.27 & 0.27 & \underline{0.07} & 0.13 & 0.20 & 0.20 & 0.53 & 0.40 & 0.07 & 0.40 & 0.35 & 0.00 & 0.07 & 0.16 & 0.07 & 0.40 & 0.33 & 0.08 & 0.30 & 0.28 & $1.3e^{-14}${\tiny (+)} \\
Simple-Traversal & \underline{0.27} & 0.40 & 0.37 & \underline{0.07} & 0.47 & 0.40 & 0.07 & \underline{0.60} & 0.44 & 0.00 & 0.40 & 0.25 & \underline{0.20} & \underline{0.40} & 0.35 & 0.20 & 0.33 & 0.31 & 0.13 & 0.43 & 0.35 & $1.8e^{-13}${\tiny (+)} \\
Score-Ordering & 0.20 & 0.40 & 0.41 & \underline{0.07} & 0.40 & 0.37 & 0.13 & 0.47 & 0.43 & 0.13 & 0.53 & 0.45 & 0.13 & \underline{0.40} & 0.36 & 0.13 & 0.53 & 0.49 & 0.13 & 0.46 & 0.42 & $7.5e^{-13}${\tiny (+)} \\
BARO & 0.00 & \underline{0.73} & \underline{0.60} & 0.00 & \underline{0.67} & \underline{0.57} & \underline{0.67} & \textbf{0.93} & \underline{0.85} & 0.07 & \underline{0.93} & \underline{0.71} & 0.13 & \textbf{1.00} & \underline{0.79} & 0.00 & \underline{0.60} & \underline{0.51} & 0.14 & \underline{0.81} & \underline{0.67} & $8.1e^{-14}${\tiny (+)} \\
\midrule
\textbf{\methodname} & \textbf{0.87} & \textbf{1.00} & \textbf{0.97} & \textbf{0.73} & \textbf{0.93} & \textbf{0.89} & \textbf{0.93} & \textbf{0.93} & \textbf{0.93} & \textbf{0.80} & \textbf{1.00} & \textbf{0.96} & \textbf{1.00} & \textbf{1.00} & \textbf{1.00} & \textbf{0.80} & \textbf{0.87} & \textbf{0.91} & \textbf{0.86} & \textbf{0.96} & \textbf{0.94} & \textbf{N/A} \\
\bottomrule
\end{tabular}
}%
\vspace{-5pt}
\end{table*}

\subsubsection{Train Ticket (Multimodal)}

On RE2TT (Train Ticket with multimodal data), \methodname{} achieves $51\%$ Top-1 and $89\%$ Top-3 accuracy. Most baselines fail entirely on this complex system ($0\%$ Top-1), while \methodname{} achieves $73\%$ Top-1 on DISK faults and $67\%$ on MEM. Only BARO shows partial competitiveness ($32\%$ Top-1).

\begin{table*}[htbp]
\vspace{-5pt}
\centering
\caption{RCA method performance on RE2TT (Train Ticket with multimodal data).}
\vspace{-5pt}
\label{tab:rca_re2tt}
\resizebox{\textwidth}{!}{%
\setlength{\tabcolsep}{2pt}
\begin{tabular}{lcccccccccccccccccccccc}
\toprule
Method & \multicolumn{3}{c}{CPU} & \multicolumn{3}{c}{DELAY} & \multicolumn{3}{c}{DISK} & \multicolumn{3}{c}{LOSS} & \multicolumn{3}{c}{MEM} & \multicolumn{3}{c}{SOCKET} & \multicolumn{3}{c}{AVERAGE} & \multicolumn{1}{c}{STAT} \\
\cline{2-4} \cline{5-7} \cline{8-10} \cline{11-13} \cline{14-16} \cline{17-19} \cline{20-22} \cline{23-23}
 & Top-1 & Top-3 & Avg@5 & Top-1 & Top-3 & Avg@5 & Top-1 & Top-3 & Avg@5 & Top-1 & Top-3 & Avg@5 & Top-1 & Top-3 & Avg@5 & Top-1 & Top-3 & Avg@5 & Top-1 & Top-3 & Avg@5 & p-value \\
\midrule
PC-PageRank & 0.00 & 0.00 & 0.00 & \underline{0.00} & 0.00 & 0.00 & 0.00 & 0.00 & 0.00 & 0.00 & 0.00 & 0.00 & 0.00 & 0.00 & 0.00 & 0.00 & 0.00 & 0.00 & 0.00 & 0.00 & 0.00 & $4.9e^{-15}${\tiny (+)} \\
RCLAgent & 0.00 & 0.00 & 0.00 & \underline{0.00} & 0.00 & 0.00 & 0.00 & 0.00 & 0.00 & 0.00 & 0.00 & 0.00 & 0.00 & 0.00 & 0.00 & 0.00 & 0.00 & 0.00 & 0.00 & 0.00 & 0.00 & $4.0e^{-15}${\tiny (+)} \\
Counterfactual & 0.00 & 0.00 & 0.00 & \underline{0.00} & 0.00 & 0.00 & 0.00 & 0.07 & 0.07 & 0.00 & 0.00 & 0.01 & 0.00 & 0.07 & 0.05 & 0.00 & 0.00 & 0.00 & 0.00 & 0.02 & 0.02 & $4.9e^{-15}${\tiny (+)} \\
Simple-Traversal & 0.00 & 0.00 & 0.03 & \underline{0.00} & 0.00 & 0.01 & \underline{0.07} & 0.07 & 0.07 & 0.00 & 0.00 & 0.03 & 0.00 & 0.00 & 0.00 & 0.00 & 0.00 & 0.00 & 0.01 & 0.01 & 0.02 & $4.9e^{-15}${\tiny (+)} \\
Cholesky & 0.00 & 0.13 & 0.08 & \underline{0.00} & 0.00 & 0.00 & 0.00 & 0.00 & 0.00 & \underline{0.07} & 0.07 & 0.07 & 0.00 & 0.00 & 0.01 & 0.00 & 0.07 & 0.05 & 0.01 & 0.04 & 0.04 & $1.4e^{-14}${\tiny (+)} \\
Smooth-Traversal & 0.07 & 0.07 & 0.09 & \underline{0.00} & 0.00 & 0.00 & 0.00 & 0.07 & 0.05 & 0.00 & 0.07 & 0.04 & 0.07 & 0.13 & 0.11 & 0.00 & 0.00 & 0.00 & 0.02 & 0.06 & 0.05 & $5.3e^{-15}${\tiny (+)} \\
Score-Ordering & 0.07 & 0.07 & 0.13 & \underline{0.00} & 0.00 & 0.03 & \underline{0.07} & \underline{0.13} & \underline{0.12} & 0.00 & 0.07 & 0.09 & 0.07 & 0.07 & 0.07 & 0.00 & 0.00 & 0.00 & 0.03 & 0.06 & 0.07 & $2.9e^{-14}${\tiny (+)} \\
BARO & \underline{0.20} & \underline{0.73} & \underline{0.65} & \underline{0.00} & \underline{0.20} & \underline{0.19} & \textbf{0.73} & \textbf{1.00} & \textbf{0.95} & 0.00 & \underline{0.20} & \underline{0.19} & \underline{0.60} & \textbf{1.00} & \textbf{0.92} & \underline{0.40} & \underline{0.47} & \underline{0.52} & \underline{0.32} & \underline{0.60} & \underline{0.57} & $1.5e^{-5}${\tiny (+)} \\
\midrule
\textbf{\methodname} & \textbf{0.47} & \textbf{0.87} & \textbf{0.80} & \textbf{0.47} & \textbf{0.87} & \textbf{0.79} & \textbf{0.73} & \textbf{1.00} & \textbf{0.95} & \textbf{0.20} & \textbf{0.67} & \textbf{0.52} & \textbf{0.67} & \textbf{0.93} & \textbf{0.88} & \textbf{0.53} & \textbf{1.00} & \textbf{0.88} & \textbf{0.51} & \textbf{0.89} & \textbf{0.80} & \textbf{N/A} \\
\bottomrule
\end{tabular}
}%
\vspace{-5pt}
\end{table*}

\subsubsection{Online Boutique (Code-Level Faults)}

On RE3OB (Online Boutique with code-level faults), \methodname{} achieves $90\%$ Top-1 and $98\%$ Top-3 accuracy. Performance is highest on F2, F3, and F4 faults ($100\%$ Top-1), with strong performance on F5 ($83\%$ Top-1). F1 achieves $67\%$ Top-1.

\begin{table*}[htbp]
\vspace{-5pt}
\centering
\caption{RCA method performance on RE3OB (Online Boutique with code-level faults).}
\vspace{-5pt}
\label{tab:rca_re3ob}
\resizebox{\textwidth}{!}{%
\setlength{\tabcolsep}{2pt}
\begin{tabular}{lccccccccccccccccccc}
\toprule
Method & \multicolumn{3}{c}{F1} & \multicolumn{3}{c}{F2} & \multicolumn{3}{c}{F3} & \multicolumn{3}{c}{F4} & \multicolumn{3}{c}{F5} & \multicolumn{3}{c}{AVERAGE} & \multicolumn{1}{c}{STAT} \\
\cline{2-4} \cline{5-7} \cline{8-10} \cline{11-13} \cline{14-16} \cline{17-19} \cline{20-20}
 & Top-1 & Top-3 & Avg@5 & Top-1 & Top-3 & Avg@5 & Top-1 & Top-3 & Avg@5 & Top-1 & Top-3 & Avg@5 & Top-1 & Top-3 & Avg@5 & Top-1 & Top-3 & Avg@5 & p-value \\
\midrule
Smooth-Traversal & 0.11 & 0.11 & 0.11 & 0.00 & 0.33 & 0.27 & 0.00 & 0.17 & 0.13 & 0.17 & 0.17 & 0.17 & 0.00 & 0.17 & 0.13 & 0.06 & 0.19 & 0.16 & $8.8e^{-6}${\tiny (+)} \\
Counterfactual & 0.11 & 0.11 & 0.11 & \underline{0.33} & 0.33 & 0.47 & 0.00 & 0.17 & 0.13 & 0.00 & 0.17 & 0.13 & 0.00 & 0.00 & 0.00 & 0.09 & 0.16 & 0.17 & $8.8e^{-6}${\tiny (+)} \\
Cholesky & 0.11 & 0.33 & 0.27 & 0.00 & 0.33 & 0.20 & \underline{0.33} & 0.50 & 0.43 & 0.00 & 0.00 & 0.10 & \underline{0.33} & 0.33 & 0.37 & 0.16 & 0.30 & 0.27 & $3.7e^{-5}${\tiny (+)} \\
RCLAgent & \underline{0.25} & 0.25 & 0.25 & 0.00 & 0.00 & 0.00 & 0.17 & 0.33 & 0.30 & \underline{0.33} & \underline{0.50} & 0.47 & 0.17 & \underline{0.50} & 0.40 & 0.18 & 0.32 & 0.28 & $3.7e^{-5}${\tiny (+)} \\
PC-PageRank & 0.11 & 0.22 & 0.33 & 0.00 & 0.00 & 0.27 & 0.00 & 0.17 & 0.13 & 0.00 & 0.33 & 0.27 & \underline{0.33} & \underline{0.50} & \underline{0.57} & 0.09 & 0.24 & 0.31 & $3.7e^{-5}${\tiny (+)} \\
Simple-Traversal & 0.11 & 0.22 & 0.20 & \underline{0.33} & \underline{0.67} & 0.60 & 0.17 & 0.50 & 0.43 & 0.17 & 0.33 & 0.30 & 0.17 & 0.33 & 0.30 & 0.19 & 0.41 & 0.37 & $3.7e^{-5}${\tiny (+)} \\
Score-Ordering & 0.22 & 0.44 & 0.44 & \underline{0.33} & \underline{0.67} & 0.67 & 0.17 & \underline{0.67} & 0.50 & 0.17 & 0.17 & 0.30 & 0.17 & \underline{0.50} & 0.43 & \underline{0.21} & 0.49 & 0.47 & $4.7e^{-5}${\tiny (+)} \\
BARO & 0.00 & \textbf{1.00} & \underline{0.80} & 0.00 & \textbf{1.00} & \underline{0.73} & 0.17 & \textbf{1.00} & \underline{0.83} & 0.00 & \textbf{1.00} & \underline{0.77} & 0.00 & \underline{0.50} & 0.40 & 0.03 & \underline{0.90} & \underline{0.71} & $2.5e^{-4}${\tiny (+)} \\
\midrule
\textbf{\methodname} & \textbf{0.67} & \textbf{0.89} & \textbf{0.87} & \textbf{1.00} & \textbf{1.00} & \textbf{1.00} & \textbf{1.00} & \textbf{1.00} & \textbf{1.00} & \textbf{1.00} & \textbf{1.00} & \textbf{1.00} & \textbf{0.83} & \textbf{1.00} & \textbf{0.97} & \textbf{0.90} & \textbf{0.98} & \textbf{0.97} & \textbf{N/A} \\
\bottomrule
\end{tabular}
}%
\vspace{-5pt}
\end{table*}

\subsubsection{Sock Shop (Code-Level Faults)}

On RE3SS (Sock Shop with code-level faults), \methodname{} achieves $20\%$ Top-1 and $79\%$ Top-3 accuracy. The high Top-3 relative to Top-1 indicates the root cause consistently appears among top candidates, with the highest performance on F1 ($60\%$ Top-1).

\begin{table*}[htbp]
\centering
\caption{RCA method performance on RE3SS (Sock Shop with code-level faults).}
\label{tab:rca_re3ss}
\resizebox{\textwidth}{!}{%
\setlength{\tabcolsep}{2pt}
\begin{tabular}{lcccccccccccccccc}
\toprule
Method & \multicolumn{3}{c}{F1} & \multicolumn{3}{c}{F2} & \multicolumn{3}{c}{F3} & \multicolumn{3}{c}{F4} & \multicolumn{3}{c}{AVERAGE} & \multicolumn{1}{c}{STAT} \\
\cline{2-4} \cline{5-7} \cline{8-10} \cline{11-13} \cline{14-16} \cline{17-17}
 & Top-1 & Top-3 & Avg@5 & Top-1 & Top-3 & Avg@5 & Top-1 & Top-3 & Avg@5 & Top-1 & Top-3 & Avg@5 & Top-1 & Top-3 & Avg@5 & p-value \\
\midrule
Counterfactual & 0.10 & 0.30 & 0.30 & \underline{0.00} & 0.00 & 0.07 & \underline{0.10} & 0.10 & 0.14 & 0.14 & 0.43 & 0.34 & 0.09 & 0.21 & 0.21 & $7.9e^{-4}${\tiny (+)} \\
PC-PageRank & \underline{0.20} & 0.40 & 0.42 & \underline{0.00} & \underline{0.33} & 0.27 & 0.00 & 0.20 & 0.18 & 0.29 & 0.57 & 0.57 & 0.12 & 0.38 & 0.36 & $0.008${\tiny (+)} \\
Cholesky & \underline{0.20} & 0.50 & 0.44 & \underline{0.00} & \underline{0.33} & 0.27 & \textbf{0.20} & 0.40 & 0.44 & 0.14 & 0.43 & 0.43 & 0.14 & 0.42 & 0.39 & $0.016${\tiny (+)} \\
Score-Ordering & 0.10 & 0.40 & 0.36 & \underline{0.00} & 0.00 & 0.07 & \underline{0.10} & \underline{0.50} & 0.42 & \textbf{0.57} & \textbf{0.86} & \textbf{0.77} & \underline{0.19} & 0.44 & 0.40 & $0.063${\tiny ($\approx$)} \\
Simple-Traversal & 0.10 & 0.50 & 0.36 & \underline{0.00} & \underline{0.33} & 0.27 & \textbf{0.20} & 0.40 & 0.40 & \underline{0.43} & \underline{0.71} & \underline{0.69} & 0.18 & 0.49 & 0.43 & $0.063${\tiny ($\approx$)} \\
BARO & \underline{0.20} & \underline{0.70} & \underline{0.62} & \underline{0.00} & \underline{0.33} & 0.40 & \textbf{0.20} & 0.40 & 0.44 & 0.29 & 0.57 & 0.60 & 0.17 & 0.50 & 0.52 & $0.063${\tiny ($\approx$)} \\
Smooth-Traversal & 0.10 & \underline{0.70} & 0.52 & \textbf{0.33} & \textbf{1.00} & \textbf{0.80} & \underline{0.10} & \underline{0.50} & \underline{0.48} & 0.29 & 0.43 & 0.40 & \textbf{0.20} & \underline{0.66} & \underline{0.55} & $0.063${\tiny ($\approx$)} \\
\midrule
\textbf{\methodname} & \textbf{0.60} & \textbf{1.00} & \textbf{0.88} & \textbf{0.00} & \textbf{1.00} & \textbf{0.73} & \textbf{0.20} & \textbf{0.60} & \textbf{0.60} & \textbf{0.00} & \textbf{0.57} & \textbf{0.51} & \textbf{0.20} & \textbf{0.79} & \textbf{0.68} & \textbf{N/A} \\
\bottomrule
\end{tabular}
}%
\end{table*}

\subsubsection{Train Ticket (Code-Level Faults)}

On RE3TT (Train Ticket with code-level faults), \methodname{} achieves $90\%$ Top-1 and perfect $100\%$ Top-3 accuracy. Perfect accuracy is observed on F1, F2, and F4 faults ($100\%$ Top-1), with F3 achieving $60\%$ Top-1. All baselines fail substantially, with even the second-best method (BARO) achieving only $20\%$ Top-1 despite $100\%$ Top-3. This demonstrates \methodname{}'s robustness on complex systems where other methods cannot reliably identify the root cause.

\begin{table*}[htbp]
\centering
\caption{RCA method performance on RE3TT (Train Ticket with code-level faults).}
\label{tab:rca_re3tt}
\resizebox{\textwidth}{!}{%
\setlength{\tabcolsep}{2pt}
\begin{tabular}{lcccccccccccccccc}
\toprule
Method & \multicolumn{3}{c}{F1} & \multicolumn{3}{c}{F2} & \multicolumn{3}{c}{F3} & \multicolumn{3}{c}{F4} & \multicolumn{3}{c}{AVERAGE} & \multicolumn{1}{c}{STAT} \\
\cline{2-4} \cline{5-7} \cline{8-10} \cline{11-13} \cline{14-16} \cline{17-17}
 & Top-1 & Top-3 & Avg@5 & Top-1 & Top-3 & Avg@5 & Top-1 & Top-3 & Avg@5 & Top-1 & Top-3 & Avg@5 & Top-1 & Top-3 & Avg@5 & p-value \\
\midrule
PC-PageRank & 0.00 & 0.00 & 0.00 & 0.00 & 0.00 & 0.00 & 0.00 & 0.00 & 0.00 & 0.00 & \underline{0.00} & 0.00 & 0.00 & 0.00 & 0.00 & $1.6e^{-6}${\tiny (+)} \\
RCLAgent & 0.00 & 0.00 & 0.00 & 0.00 & 0.00 & 0.00 & 0.00 & 0.00 & 0.00 & 0.00 & \underline{0.00} & 0.00 & 0.00 & 0.00 & 0.00 & $1.6e^{-6}${\tiny (+)} \\
Smooth-Traversal & 0.00 & 0.00 & 0.00 & 0.00 & 0.00 & 0.00 & 0.00 & 0.00 & 0.04 & 0.00 & \underline{0.00} & 0.00 & 0.00 & 0.00 & 0.01 & $1.6e^{-6}${\tiny (+)} \\
Cholesky & 0.00 & \underline{0.14} & 0.11 & 0.00 & 0.00 & 0.00 & 0.00 & 0.00 & 0.00 & 0.00 & \underline{0.00} & 0.00 & 0.00 & 0.04 & 0.03 & $1.6e^{-6}${\tiny (+)} \\
Counterfactual & 0.00 & 0.00 & 0.00 & \underline{0.14} & \underline{0.14} & 0.14 & 0.00 & 0.00 & 0.00 & 0.00 & \underline{0.00} & 0.00 & 0.04 & 0.04 & 0.04 & $1.6e^{-6}${\tiny (+)} \\
Score-Ordering & 0.00 & 0.00 & 0.00 & 0.00 & \underline{0.14} & 0.11 & 0.10 & \underline{0.10} & 0.14 & 0.00 & \underline{0.00} & 0.00 & 0.03 & 0.06 & 0.06 & $1.6e^{-6}${\tiny (+)} \\
Simple-Traversal & 0.00 & \underline{0.14} & 0.09 & 0.00 & \underline{0.14} & 0.17 & 0.00 & \underline{0.10} & 0.08 & 0.00 & \underline{0.00} & 0.00 & 0.00 & \underline{0.10} & 0.08 & $1.6e^{-6}${\tiny (+)} \\
BARO & \underline{0.14} & \textbf{1.00} & \underline{0.83} & \underline{0.14} & \textbf{1.00} & \underline{0.74} & \underline{0.20} & \textbf{1.00} & \underline{0.80} & \underline{0.33} & \textbf{1.00} & \underline{0.87} & \underline{0.20} & \textbf{1.00} & \underline{0.81} & $2.9e^{-4}${\tiny (+)} \\
\midrule
\textbf{\methodname} & \textbf{1.00} & \textbf{1.00} & \textbf{1.00} & \textbf{1.00} & \textbf{1.00} & \textbf{1.00} & \textbf{0.60} & \textbf{1.00} & \textbf{0.90} & \textbf{1.00} & \textbf{1.00} & \textbf{1.00} & \textbf{0.90} & \textbf{1.00} & \textbf{0.97} & \textbf{N/A} \\
\bottomrule
\end{tabular}
}%
\end{table*}

\subsection{Case study: IT-Score Saturation Problem}\label{subsec:it-saturation}

\citet{orchardroot} propose using IT (Information-Theoretic) anomaly scores for root cause analysis. Given $k$ pre-fault observations, the IT-score is estimated as:
\begin{equation}\label{eq:it-score-est}
    \hat{S}(x) = -\log \frac{|\{i : \tau(x_i) \geq \tau(x)\}|}{k}
\end{equation}
where $\tau$ is a feature function (e.g., z-score or IQR-based score).

However, this formulation suffers from a \emph{saturation problem} in practice. When a failure occurs, multiple components exhibit anomalies that exceed their normal variation. If the maximum post-fault $\tau$-score exceeds \emph{all} pre-fault $\tau$-scores, the count in Eq.~\ref{eq:it-score-est} becomes zero, yielding identical IT-scores for all affected components.

\begin{figure*}[t]
\centering
\includegraphics[width=0.7\textwidth]{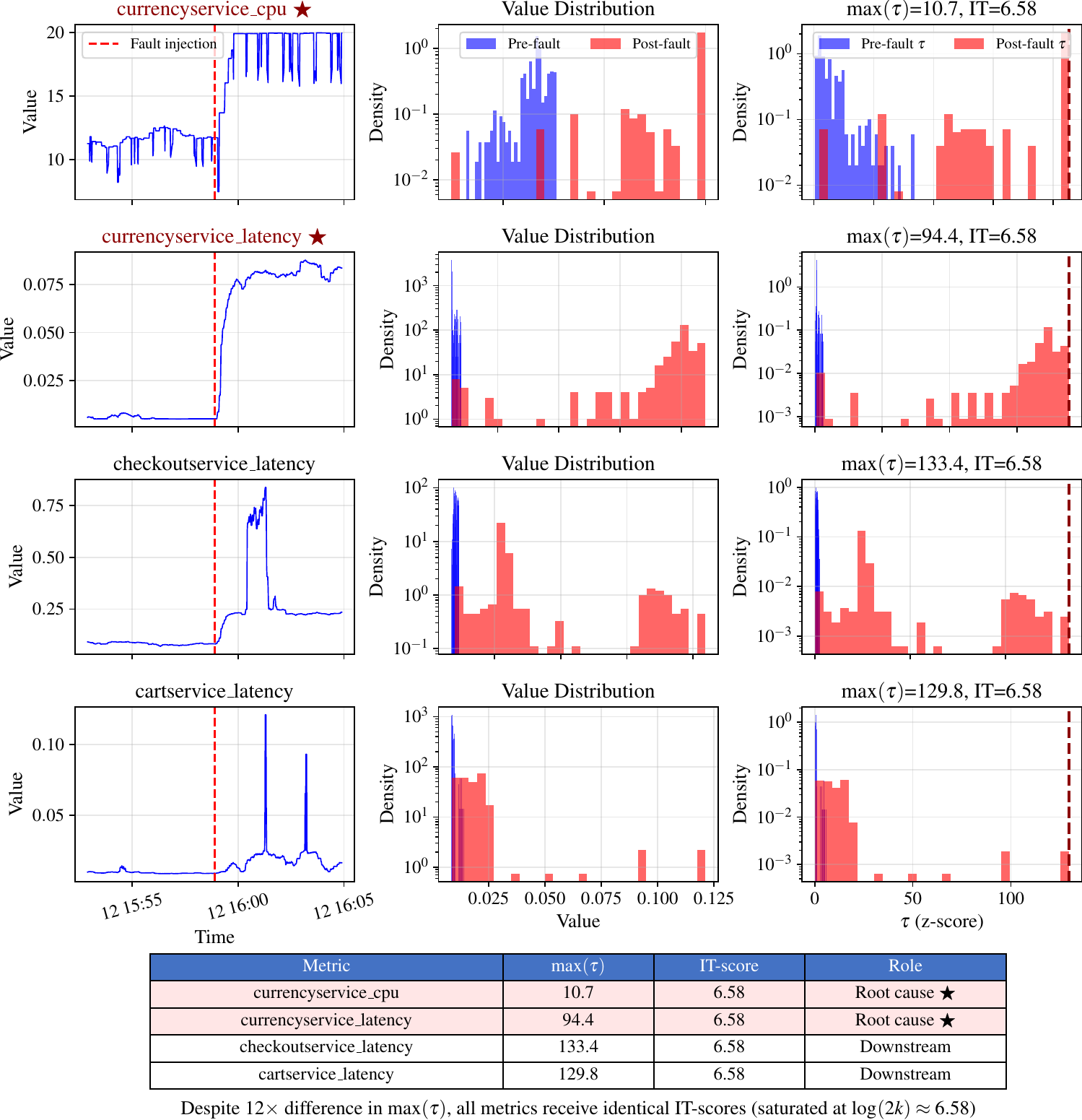}
\caption{IT-score saturation on a disk fault in \service{currencyservice}. Despite vastly different anomaly magnitudes (max $\tau$ ranging from 10.7 to 133.4), all metrics receive identical IT-scores of approximately 6.58. The root cause metrics (\service{currencyservice\_cpu} and \service{currencyservice\_latency}, marked $\star$) cannot be distinguished from downstream effects (\service{checkoutservice\_latency}, \service{cartservice\_latency}) based on IT-scores alone. The summary table quantifies the saturation: a 12$\times$ difference in $\tau$ magnitude yields identical scores.}
\label{fig:it-saturation}
\end{figure*}

Figure~\ref{fig:it-saturation} demonstrates this phenomenon on a real failure case from the Online Boutique benchmark. A disk fault was injected into \service{currencyservice}, causing anomalies in both root cause metrics (\service{currencyservice\_cpu}, \service{currencyservice\_latency}) and downstream latency metrics (\service{checkoutservice\_latency}, \service{cartservice\_latency}). Despite max $\tau$ values ranging from 10.7 to 133.4 (a 12$\times$ difference), all metrics receive identical IT-scores of approximately $6.58$. This occurs because with $k=360$ pre-fault samples, the maximum possible IT-score is $-\log(1/360) \approx 5.89$, and all metrics with post-fault anomalies exceeding historical bounds saturate near this ceiling.

The fundamental limitation is that IT-scores measure \emph{whether} an anomaly occurred (a binary notion of ``surprise''), not \emph{how severe} it is relative to other anomalies. This renders IT-scores ineffective for ranking candidate root causes in systems where failures propagate to multiple downstream components.

\subsection{IT-Score Saturation Theory}\label{subsec:it-saturation-theory}

The Information-Theoretic (IT) score transforms deviation through tail probability:
\begin{equation}\label{eq:it-score}
    S_{\text{IT}}(x) = -\log P(\tau(X) \geq \tau(x))
\end{equation}
where $\tau: \mathcal{X} \to \mathbb{R}$ is a feature function (e.g., z-score) and $P$ is the reference distribution. 

Given $k$ reference observations $\{x_1, \ldots, x_k\}$ from the normal operating period, the empirical IT-score is:
\begin{equation}\label{eq:empirical-it}
    \hat{S}_{\text{IT}}(x) = -\log \frac{|\{i \in [k] : \tau(x_i) \geq \tau(x)\}|}{k}
\end{equation}
where $[k] = \{1, \ldots, k\}$.

\begin{theorem}[IT-Score Saturation]\label{thm:it-saturation}
Let $\hat{S}_{\text{IT}}$ be the empirical IT-score with $k$ reference observations. Then:
\begin{enumerate}[nolistsep,leftmargin=*]
    \item \textbf{Upper bound:} $\hat{S}_{\text{IT}}(x) \leq \log k$ for all $x$.
    \item \textbf{Saturation:} If $\tau(x) > \max_{i \in [k]} \tau(x_i)$, then $\hat{S}_{\text{IT}}(x) = \log k$ (or $\approx \log(2k)$ with Laplace smoothing).
    \item \textbf{Non-injectivity:} For multiple observations $x^{(1)}, \ldots, x^{(m)}$ all exceeding the reference maximum, $\hat{S}_{\text{IT}}(x^{(j)})$ is identical for all $j \in [m]$.
\end{enumerate}
See Appendix~\ref{app:proofs} for the proof.
\end{theorem}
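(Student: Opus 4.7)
The plan is to prove all three parts by direct analysis of the count function $N(x) = |\{i \in [k] : \tau(x_i) \geq \tau(x)\}|$ that appears in the denominator of the empirical score from Eq.~\ref{eq:empirical-it}. Since $N$ is integer-valued in $\{0, 1, \ldots, k\}$, the possible values of $\hat{S}_{\text{IT}}(x) = -\log(N(x)/k)$ form a discrete set, and each of the three claims is a statement about the extremes of this set.

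For the upper bound (part 1), I would observe that whenever $\hat{S}_{\text{IT}}(x)$ is finite one has $N(x) \geq 1$, and the score $-\log(N(x)/k)$ is monotonically decreasing in $N(x)$. Plugging in the minimum positive count $N(x) = 1$ yields $\hat{S}_{\text{IT}}(x) \leq -\log(1/k) = \log k$. For the saturation claim (part 2), I would consider any $x$ with $\tau(x)$ strictly exceeding $\max_{i \in [k]} \tau(x_i)$; by construction no reference index satisfies $\tau(x_i) \geq \tau(x)$, so $N(x) = 0$. Under the standard Laplace smoothing convention that replaces the numerator $0$ by $1/2$ and the denominator $k$ by $k + 1/2$, the smoothed score evaluates to $-\log\bigl((1/2)/(k+1/2)\bigr) = \log(2k+1)$, which is $\approx \log(2k)$ for $k$ large. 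Non-injectivity (part 3) then follows immediately: for \emph{any} two observations $x^{(1)}, x^{(2)}$ that both exceed the reference maximum, the count is $0$ (or $1/2$ under smoothing) in each case, so the scores coincide regardless of the gap $\tau(x^{(1)}) - \tau(x^{(2)})$.

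The only mild subtlety I anticipate is reconciling parts 1 and 2: the raw upper bound is $\log k$, but smoothing pushes the saturated value slightly above it to $\log(2k)$. I would address this by stating the smoothing convention explicitly at the outset and noting that part 1 refers to the unsmoothed, finite-score regime while part 2 describes the boundary behaviour once $N(x)=0$ is reached. Beyond this bookkeeping, no analytic machinery is required; the proof is essentially combinatorial, since everything reduces to the integer-valued nature of the rank count and the monotonicity of $-\log$.
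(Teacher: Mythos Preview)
Your proposal is correct and follows essentially the same route as the paper's proof: both analyze the integer-valued count $N(x)\in\{0,1,\ldots,k\}$, use the minimum positive count $N(x)=1$ to obtain the $\log k$ upper bound, argue $N(x)=0$ when $\tau(x)$ exceeds the reference maximum and apply the same Laplace smoothing convention to get $\approx\log(2k)$, and conclude non-injectivity from the fact that all such observations share the same count. Your explicit remark reconciling the unsmoothed bound in part~1 with the smoothed value in part~2 is a small clarifying addition not spelled out in the paper, but the underlying argument is identical.
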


\begin{corollary}[IT-Scores are Non-Injective Under Saturation]\label{cor:it-non-injective}
When multiple post-fault observations exceed the reference distribution (i.e., $\tau(x^{(j)}) > \max_{i \in [k]} \tau(x_i)$ for $j = 1, \ldots, m$), the empirical IT-score fails to be injective, assigning identical scores to observations with potentially vastly different deviation magnitudes.
\end{corollary}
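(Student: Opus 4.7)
The plan is to derive this corollary as a direct unpacking of Theorem~\ref{thm:it-saturation} combined with Definition~\ref{def:injectivity}. The argument is short because Theorem~\ref{thm:it-saturation} has already done the heavy lifting; what remains is to instantiate it at each $x^{(j)}$ and verify that the definition of injectivity is violated.

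First, I would invoke Theorem~\ref{thm:it-saturation}(2) for every $j \in \{1, \ldots, m\}$: since $\tau(x^{(j)}) > \max_{i \in [k]} \tau(x_i)$, the empirical count $|\{i \in [k] : \tau(x_i) \geq \tau(x^{(j)})\}|$ equals zero, so $\hat{S}_{\text{IT}}(x^{(j)})$ saturates at the ceiling value ($\log k$ without smoothing, or $\approx \log(2k)$ with Laplace smoothing). Crucially, this value depends only on $k$ and the smoothing convention chosen, not on the specific magnitude of $\tau(x^{(j)})$. Applying the same argument uniformly across $j$ yields $\hat{S}_{\text{IT}}(x^{(1)}) = \cdots = \hat{S}_{\text{IT}}(x^{(m)})$.

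Second, to conclude failure of injectivity, I would exhibit a pair of indices $(j_1, j_2)$ with $\tau(x^{(j_1)}) \neq \tau(x^{(j_2)})$, which is precisely the ``vastly different deviation magnitudes'' scenario asserted in the corollary statement (and is the typical situation in practice, as illustrated by Figure~\ref{fig:it-saturation} where $\tau$-magnitudes span a $12\times$ range while all scores saturate). The defining implication of Definition~\ref{def:injectivity} --- distinct deviations must yield distinct scores --- is then directly contradicted by the equality chain above, establishing non-injectivity. I do not anticipate any real obstacle; the only minor subtlety is keeping the Laplace-smoothing convention explicit in the write-up, but the conclusion is insensitive to this choice since saturation forces equality regardless of the particular constant that serves as the ceiling.
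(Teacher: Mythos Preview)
Your proposal is correct and matches the paper's approach: the paper does not give a standalone proof of this corollary but treats it as an immediate consequence of Theorem~\ref{thm:it-saturation} (whose part~(3) already asserts that all observations exceeding the reference maximum receive identical scores). Your write-up simply unpacks this by invoking part~(2) at each $x^{(j)}$ and then checking Definition~\ref{def:injectivity} explicitly, which is exactly the intended derivation with a bit more care.
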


\begin{theorem}[Counterexample to Marginal Score Ordering]\label{thm:counter}
There exist systems where faults propagate through external properties such that the root cause component does \emph{not} have the highest marginal anomaly score $S^E$. See Appendix~\ref{app:proofs} for the proof.
\end{theorem}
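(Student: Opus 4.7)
The plan is to construct an explicit two-component system satisfying the Component--Property Model (Definition~\ref{def:system}) together with Axioms~\ref{ax:direction}--\ref{ax:isolation}, and then exhibit a fault scenario in which the true root cause has a \emph{strictly smaller} external anomaly score than a downstream affected component. The construction mirrors the fan-in pattern flagged in the introduction: a caller that issues $k$ sequential calls to a callee, so that any per-call latency inflation at the callee aggregates multiplicatively at the caller.

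First I would fix two components $C_{\mathrm{callee}}$ and $C_{\mathrm{caller}}$, each equipped with one internal property (e.g., a resource state such as disk I/O) and one external property (latency $L$). The causal structure respects Axiom~\ref{ax:direction} via $I_{\mathrm{callee}} \to L_{\mathrm{callee}}$ and $I_{\mathrm{caller}} \to L_{\mathrm{caller}}$, and respects Axiom~\ref{ax:isolation} via the inter-component edge $L_{\mathrm{callee}} \to L_{\mathrm{caller}}$ with no edges into any internal property of a different component. The call semantics impose the structural relation $L_{\mathrm{caller}} = k \cdot L_{\mathrm{callee}}$, so in the nominal regime $L_{\mathrm{callee}} = L_0$ and $L_{\mathrm{caller}} = kL_0$, both with matched baseline noise scale $s$ drawn from a common stationary pre-fault reference.

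Next I would inject a fault into $I_{\mathrm{callee}}$ that elevates the callee's latency by $\Delta > 0$, yielding $L_{\mathrm{callee}} = L_0 + \Delta$ and $L_{\mathrm{caller}} = kL_0 + k\Delta$. The raw external deviations from the respective baselines are therefore $\Delta$ and $k\Delta$. For any deviation-based scorer as in Definition~\ref{def:deviation-score}, monotonicity (Theorem~\ref{thm:deviation-conditions}) combined with the assumption of a common scale parameter gives $S^E(C_{\mathrm{caller}}) = k\Delta/s > \Delta/s = S^E(C_{\mathrm{callee}})$ whenever $k > 1$. This contradicts the assumption that the root cause has the highest marginal external score and establishes the theorem. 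I would close with the observation that $S^I(C_{\mathrm{callee}})$ is elevated while $S^I(C_{\mathrm{caller}})$ remains nominal (a direct consequence of Axiom~\ref{ax:isolation}), which foreshadows why \methodname{}'s internal--external decomposition still succeeds in exactly this adversarial regime.

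The argument is an existence proof, so the main obstacle is not technical difficulty but precision of the construction. I expect the only subtlety to be the scale-parameter bookkeeping in Definition~\ref{def:deviation-score}: without the shared-scale assumption, monotonicity of a per-observable scorer does not by itself compare scores \emph{across} components, since $S(x;\theta_1)$ and $S(y;\theta_2)$ are computed against different reference parameters. Assuming stationary, identically distributed pre-fault latency noise at both observables (a standard modeling choice for a steady-state call pattern) makes $s(\theta_{\mathrm{callee}}) = s(\theta_{\mathrm{caller}})$ and removes this ambiguity; a more elaborate version of the same counterexample with $k$ chosen large enough to dominate any fixed scale ratio $s(\theta_{\mathrm{caller}})/s(\theta_{\mathrm{callee}})$ would handle the general case.
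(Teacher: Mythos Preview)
Your construction is correct and matches the paper's own proof essentially line for line: both use the caller--callee fan-in system with $L_{\mathrm{caller}} = k \cdot L_{\mathrm{callee}}$, inject a $\Delta$-shift at the callee, and conclude $S^E(C_{\mathrm{caller}}) > S^E(C_{\mathrm{callee}})$ for $k>1$, then note the asymmetry in internal scores. You are in fact more careful than the paper on the scale-parameter point---the paper simply asserts monotonicity across components without discussing whether the per-component reference scales $s(\theta)$ agree, whereas you flag this and give the fix (shared scale, or $k$ large enough to dominate any fixed scale ratio).
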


\begin{corollary}[Deviation-Based Scores Enable RCA]\label{cor:deviation-rca}
Under the assumption that the root cause component has a distinct deviation magnitude from all other components, deviation-based scores (z-score, IQR-score, MAD-score) can correctly identify and rank the root cause. See Appendix~\ref{app:proofs} for the proof.
\end{corollary}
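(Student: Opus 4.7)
The plan is to derive the corollary directly from Theorem~\ref{thm:deviation-conditions} together with the distinctness hypothesis. First, I would invoke Theorem~\ref{thm:deviation-conditions} to conclude that any deviation-based scorer $S(x;\theta) = |x - c(\theta)|/s(\theta)$ satisfies both monotonicity (Definition~\ref{def:monotonicity}) and injectivity (Definition~\ref{def:injectivity}). These two properties together supply the ranking semantics needed: monotonicity ensures that score order agrees with deviation-magnitude order, and injectivity ensures that this order contains no ties.

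Second, under the hypothesis that the root cause $C_r$ has deviation magnitude distinct from every other component, the pairwise condition $|x_r - c| \neq |x_j - c|$ holds for all $j \neq r$. Applying injectivity yields $S(x_r) \neq S(x_j)$, so no ties obstruct the final ranking. This step is the positive companion to Theorem~\ref{thm:rca-necessity}: that theorem showed injectivity is \emph{necessary} to avoid the failure mode of indistinguishable candidates, and here the same property is used as \emph{sufficient} to certify a well-defined ranking.

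Third, I would invoke monotonicity in the direction relevant to RCA: taking $C_r$ to be the component with the largest (distinct) deviation, the implication $|x_r - c| > |x_j - c| \Rightarrow S(x_r) > S(x_j)$ delivers $S(x_r) > S(x_j)$ for every $j \neq r$. Hence $C_r$ uniquely attains the maximum score and is ranked first, which is exactly the statement of the corollary.

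The main obstacle, and the only nontrivial point worth flagging in the proof, is clarifying what ``distinct deviation magnitude'' means operationally. The corollary as written is silent on whether the root cause is assumed merely to have a unique deviation or also the \emph{largest} one; a clean proof requires the latter, in line with Assumption~\ref{asm:sufficiency} and the fault-manifestation discussion of Section~\ref{subsec:anomaly_scoring}. I would therefore open the proof by making this interpretation explicit, after which the argument collapses to a two-line application of monotonicity and injectivity with no further calculation required.
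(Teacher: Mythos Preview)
Your proposal is correct and follows essentially the same route as the paper's own proof: both invoke Theorem~\ref{thm:deviation-conditions} for monotonicity and injectivity, then conclude that the component with the largest deviation is ranked first. Your treatment is in fact more careful than the paper's, which compresses the argument to two sentences and never addresses the ``distinct versus largest'' ambiguity you flag; the paper simply asserts that ``the component with the largest deviation receives the highest score and is ranked first'' without justifying why the root cause is that component, so your explicit interpretive step is a genuine improvement.
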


\begin{corollary}[IT-Scores Fail for RCA Under Saturation]\label{cor:it-fails}
When a fault causes multiple components to exhibit anomalies exceeding their historical bounds (a common scenario in distributed systems where faults propagate), IT-scores assign identical scores to all affected components, preventing reliable root cause identification. See Appendix~\ref{app:proofs} for the proof.
\end{corollary}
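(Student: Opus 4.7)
The plan is to chain two results already established in the paper: the saturation Theorem~\ref{thm:it-saturation} (with its immediate Corollary~\ref{cor:it-non-injective}) on one hand, and the necessity Theorem~\ref{thm:rca-necessity} on the other. First I would unpack the hypothesis of the corollary into the precise regime covered by Theorem~\ref{thm:it-saturation}: ``a fault causes multiple components to exhibit anomalies exceeding their historical bounds'' means that there exist at least two affected components, including the root cause $C_r$, whose post-fault observations $x^{(1)}, \ldots, x^{(m)}$ satisfy $\tau(x^{(j)}) > \max_{i \in [k]} \tau(x_i)$ for all $j \in [m]$.

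Next I would apply clause~(3) of Theorem~\ref{thm:it-saturation} directly. Every such observation lies strictly above the reference maximum, so the empirical count in Equation~\eqref{eq:empirical-it} is zero (or smoothed to $0.5$), yielding $\hat{S}_{\text{IT}}(x^{(1)}) = \cdots = \hat{S}_{\text{IT}}(x^{(m)})$ regardless of the actual magnitudes $\tau(x^{(j)})$, which may differ by orders of magnitude as illustrated in the case study of Section~\ref{subsec:it-saturation}. In the language of Definition~\ref{def:injectivity}, this is exactly a violation of injectivity on the post-fault sample, i.e.\ the conclusion of Corollary~\ref{cor:it-non-injective}.

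Finally I would invoke Theorem~\ref{thm:rca-necessity}: any non-injective scorer admits failure scenarios where it cannot solve the RCA ranking problem. Specializing the argument to our situation, $C_r$ is tied with $m-1$ affected components under $\hat{S}_{\text{IT}}$, so any tie-breaking rule places $C_r$ at rank~1 with probability at most $1/m$, which is insufficient for reliable root cause identification. The main subtlety, rather than a genuine obstacle, is the empirical-versus-population gap: Theorem~\ref{thm:rca-necessity} is stated for a generic scorer $S$, but it applies verbatim with $S = \hat{S}_{\text{IT}}$, and I would simply remark that the saturation event is not pathological but typical, since fault propagation routinely lifts the $\tau$-values of both the root cause and its downstream affected components above the historical maximum simultaneously.
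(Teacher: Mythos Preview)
Your proposal is correct and follows essentially the same approach as the paper: the paper's proof is a two-line chain invoking Theorem~\ref{thm:it-saturation} and Corollary~\ref{cor:it-non-injective} to obtain identical IT-scores, then Theorem~\ref{thm:rca-necessity} to conclude RCA failure. Your version is simply more explicit about unpacking the hypothesis and the tie-breaking probability, which is fine but not required.
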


\section{Internal-External Property Labeling}\label{app:labeling}

\subsection{Theoretical Foundation}

This section provides theoretical grounding for the internal-external property classification. The distinction between internal and external properties aligns with established observability frameworks in distributed systems, as in~\citet{li2022causal} and the Google SRE golden signals~\cite{googlesre}. We classify properties as:
\begin{itemize}[leftmargin=*,nolistsep]
\item \textbf{External properties} (Quality-of-Service metrics): Observable outcomes at service boundaries that other components can detect through interactions, including response time, duration, latency, and error rate.
\item \textbf{Internal properties} (Resource metrics): Local component states not directly observable by other components. Examples: CPU usage, memory utilization, disk I/O, socket count. The framework extends naturally to other internal data sources, such as log patterns (error counts, exception frequencies, log message anomalies) and trace spans internal to a component (processing durations, retry counts, queue depths), as long as these observables remain local to the component and are not directly visible through inter-component interfaces.
\end{itemize}

This classification reflects the RED method (Rate, Errors, Duration) for service-level metrics~\cite{wilkie2018red} and the USE method (Utilization, Saturation, Errors) for resource metrics~\cite{gregg2013thinking}. The key insight is that external properties represent \emph{what other components observe}, while internal properties represent \emph{why the component behaves that way}.

\subsection{Empirical Validation of Axioms}

We validate Axioms~\ref{ax:direction}--\ref{ax:isolation} on the 735 failure cases in the RCAEval benchmark.

\setlength{\columnsep}{10pt}
\begin{wraptable}{r}{5.5cm}
\vspace{-20pt}
\centering
\caption{Temporal precedence analysis by fault type. ``Int First'' indicates the percentage of cases where internal anomalies precede or equal external anomalies. Negative $\Delta t$ values confirm internal-first ordering.}
\label{tab:temporal-precedence}
\resizebox{\linewidth}{!}{%
\begin{tabular}{lrrr}
\toprule
\textbf{Fault Type} & \textbf{$n$} & \textbf{Int First} & \textbf{Median $\Delta t$} \\
\midrule
\multicolumn{4}{l}{\textit{Resource Faults}} \\
CPU & 117 & 76.9\% & $-7.0$s \\
DISK & 113 & 73.5\% & $-6.0$s \\
MEM & 120 & 65.8\% & $-4.0$s \\
SOCKET & 45 & 64.4\% & $-5.0$s \\
\midrule
\multicolumn{4}{l}{\textit{Network Faults}} \\
DELAY & 113 & 27.4\% & $+11.0$s \\
LOSS & 109 & 41.3\% & $+9.0$s \\
\midrule
\multicolumn{4}{l}{\textit{Code-Level Faults}} \\
F1--F4 & 78 & 83.3\% & $-13.0$s \\
\midrule
\textbf{Overall} & \textbf{698} & \textbf{60.7\%} & $\mathbf{-4.0}$s \\
\bottomrule
\end{tabular}%
}
\end{wraptable}
\subsubsection{Axiom~\ref{ax:direction}: Intracomponent Directionality}

Axiom~\ref{ax:direction} states that internal properties are causal ancestors of external properties within each component. We validate this through temporal precedence analysis. For each failure case, we detect the first post-fault timestamp where the anomaly score (z-score) exceeds a threshold of 3.0 for both internal and external properties at the root cause component. We compute $\Delta t = t_{\text{internal}} - t_{\text{external}}$, where negative values indicate internal anomalies precede external anomalies.

\mypara{Results.} Across 698 cases with detectable anomalies in both property types, internal anomalies precede or equal external anomalies in \textbf{60.7\%} of cases. Table~\ref{tab:temporal-precedence} shows results by fault type.
Resource faults (CPU, DISK, MEM, SOCKET) strongly support Axiom~\ref{ax:direction}, with 65--77\% showing internal-first ordering and median $\Delta t$ of $-4$ to $-7$ seconds. Code-level faults exhibit the strongest internal-first pattern (83.3\%), as application bugs directly affect processing logic before observable service degradation.
These faults directly affect internal resources before manifesting in service latency. Nevertheless, network faults (DELAY, LOSS) show the opposite pattern as the root cause of network faults is in the network, not the component internal.

\begin{wrapfigure}{!}{8cm}
\centering
\includegraphics[width=\linewidth]{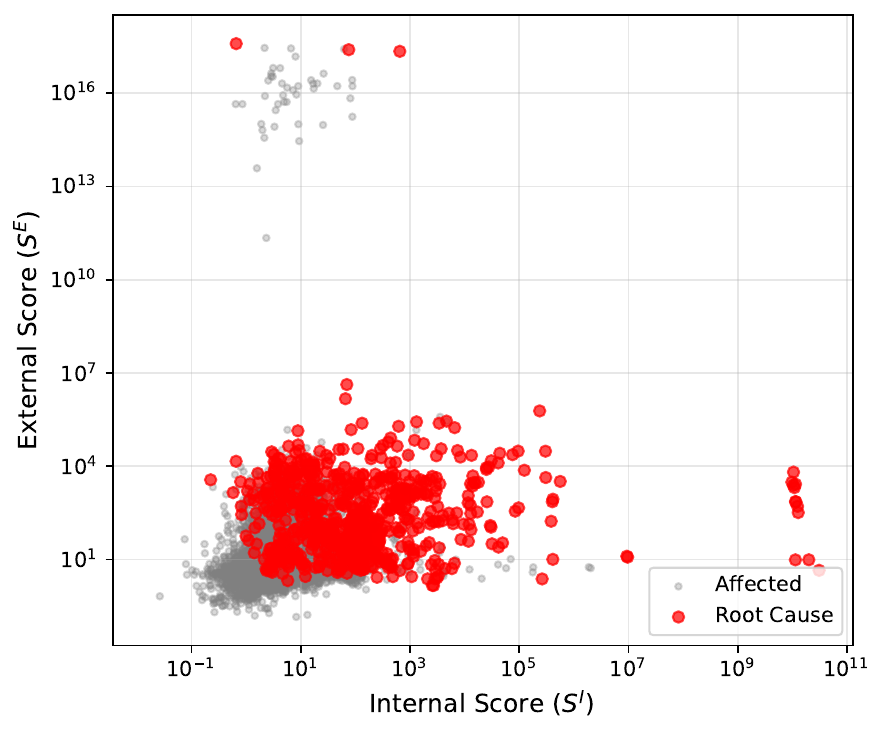}
\caption{Internal ($S^I$) vs. external ($S^E$) scores for all components across 735 failure cases. Root cause components (red) exhibit high scores in both dimensions, while affected components (gray) cluster along the axis with lower internal scores.} \label{fig:property-decomposition}
\vspace{-25pt}
\end{wrapfigure}
\subsubsection{Axiom~\ref{ax:isolation}: Intercomponent Isolation}
Axiom~\ref{ax:isolation} states that faults propagate between components only through external properties. We validate this by comparing internal scores of root cause versus affected components. For each failure case, we compute $S^I$ for the root cause and all other components. We measure the ratio $S^I(\text{root cause}) / \max_a S^I(C_a)$ where $C_a$ ranges over affected components (those with elevated external scores).

\mypara{Results.} Across all 735 cases, the root cause's internal score $S^I$ exceeds the maximum internal score among affected components in 66.5\% of cases, with a median ratio of 2.73$\times$. The separation is even more pronounced in absolute terms, where the mean root cause $S^I$ reaches 222M compared to only 6.2K for the mean maximum affected $S^I$. This substantial separation validates that faults manifest internally at the root cause while propagating only through external properties to downstream components.

Figure~\ref{fig:property-decomposition} visualizes this separation. Root cause components (red) appear in the upper-right quadrant with elevated scores in both dimensions. Affected components (gray) primarily cluster along the external axis, reflecting propagated latency/error anomalies without corresponding internal resource anomalies.

\subsubsection{Scope and Applicability of CPM Axioms}\label{app:cpm-scope}

The Component-Property Model (CPM) axioms establish a foundation for principled root cause analysis. Rather than rigid requirements, these axioms characterize system architectures where \methodname{} can be applied with high confidence. This subsection identifies real-world systems satisfying the CPM axioms, discusses deployment scenarios where axioms may not hold, and provides practitioners with a decision framework for assessing \methodname{}'s applicability to their specific systems.

\mypara{Systems Satisfying CPM Axioms.}
Many modern system architectures satisfy Axioms~\ref{ax:direction} and~\ref{ax:isolation} by design. These systems enable confident application of \methodname{} for root cause analysis.

\textit{Containerized Microservices with Resource Quotas.} Modern container orchestration platforms enforce strict resource isolation through kernel-level mechanisms. Kubernetes~\cite{kubernetes2024resources} uses Linux cgroups to implement CPU and memory limits, ensuring that one pod's resource consumption cannot directly modify another pod's internal resource availability. When properly configured, containerized deployments satisfy Axiom~\ref{ax:isolation}. The RCAEval benchmark uses such containerized Kubernetes deployments, which explains the strong empirical validation observed in Section~\ref{app:labeling}.

\textit{Actor Model Systems.} Systems based on the actor model exemplify CPM axioms through state encapsulation. Akka~\cite{akka2024actors} and Erlang OTP~\cite{erlang2024otp} enforce that each actor maintains private state, modifiable only by that actor. All inter-actor communication occurs through asynchronous message passing, representing external properties and interactions. This architectural constraint ensures that one actor cannot directly affect another's internal state, satisfying Axiom~\ref{ax:isolation}. The sequential message processing within each actor provides temporal causality (internal then external), satisfying Axiom~\ref{ax:direction}.

\textit{Service-Oriented Architecture.} SOA principles~\cite{erl1900service,soa2025reference} enforce component isolation through service interfaces. Services interact through explicit contracts, with no access to internal implementation details. This separation of interface from implementation creates boundaries that satisfy Axiom~\ref{ax:isolation}. Each service's internal state (e.g., caches, db) remains invisible to other services, while external properties (e.g., response time) propagate through the network.

\textit{Serverless with Tenant Isolation.} Recent serverless platforms offer explicit tenant isolation modes. AWS Lambda's tenant isolation feature~\cite{aws2025lambda} creates separate execution environments for each tenant, preventing resource contention and interference. While this trades off cold start overhead for stronger isolation guarantees, it ensures that one tenant's function execution cannot directly affect another tenant's internal metrics, satisfying Axiom~\ref{ax:isolation}.


\textbf{While many systems satisfy CPM axioms, we can always find a system that does not. The key insight is that CPM axioms characterize a design space where \methodname{} is most reliable. The practitioners can then decide whether thay can apply \methodname{} into their production with confidence or not.}

While CPM axioms provide a foundation for effective RCA, future work can extend \methodname{}'s applicability and robustness, including: (1) \textbf{Evaluation Under Resource Contention.} Test \methodname{} on over-subscribed Kubernetes clusters where axioms are violated. Establish performance bounds and quantify accuracy degradation when interference exceeds tolerance levels.
(2) \textbf{Axiom Violation Detection.} Develop diagnostic tests to automatically detect when CPM axioms are violated in production systems, enabling runtime warnings and guiding infrastructure improvements. (3) \textbf{Adaptive Property Classification.} Develop methods to automatically classify metrics as internal or external based on system topology and metric semantics, reducing manual labeling burden for new systems. (4) Integrate axiom compliance monitoring with infrastructure management, enabling real-time enforcement of isolation guarantees and automated remediation when violations occur.

\section{Reproducibility}\label{app:reproducibility}

We will provide the complete implementation as supplementary material shortly. This section describes how to reproduce all experimental results. 

\subsection{Environment Setup}

\begin{lstlisting}
# Requires Python 3.10+
pip install -e .
\end{lstlisting}

\noindent The code runs on standard hardware (8 CPUs, 16GB RAM). No GPU is required. The complete benchmark using pre-computed results runs in under 5 minutes.

\subsection{Reproducing Tables from Pre-computed Results}

The \texttt{output/} directory contains pre-computed results for all 735 failure cases. The following commands generate the paper tables.

\mypara{Table~\ref{tab:rca_rcaeval} (Overall Performance).}
\begin{lstlisting}
python run_eval.py \
  --methods cholesky baro counterfactual pc_pagerank \
    rclagent score_ordering simple_traversal \
    smooth_traversal prism \
  --sort --ignore-top5 --average-only --ignore-std \
  --text-width --pvalue --latex table.tex
\end{lstlisting}

\noindent Expected output: \methodname{} achieves 68\% Top-1, 91\% Top-3, 87\% Avg@5.

\mypara{Table~\ref{tab:rca_re2ob} (RE2OB Detailed Results).}
\begin{lstlisting}
python run_eval.py --filter re2ob \
  --methods cholesky baro counterfactual pc_pagerank \
    rclagent score_ordering simple_traversal \
    smooth_traversal prism \
  --sort --ignore-top5 --ignore-std --text-width \
  --pvalue --latex table.tex
\end{lstlisting}

\mypara{Table~\ref{tab:robustness} (Robustness Analysis).}

Anomaly scorers:
\begin{lstlisting}
python run_eval.py --methods prism prism_iqr \
  prism_it prism_it_iq --ablation --latex table.tex
\end{lstlisting}

\noindent Pooling strategies:
\begin{lstlisting}
python run_eval.py --methods prism prism_mean \
  prism_sum --ablation --latex table.tex
\end{lstlisting}

\noindent Root cause scoring:
\begin{lstlisting}
python run_eval.py --methods prism prism_additive \
  --ablation --ignore-std --latex table.tex
\end{lstlisting}

\mypara{Table~\ref{tab:ablation} (Ablation Study).}
\begin{lstlisting}
python run_eval.py --methods prism_marginal prism \
  prism_internal prism_external \
  --ablation --latex table.tex
\end{lstlisting}

\mypara{Per-Dataset Tables (Appendix~\ref{appendix:rca-effectiveness}).}
\begin{lstlisting}
# Replace DATASET with: re1ob, re1ss, re1tt,
# re2ob, re2ss, re2tt, re3ob, re3ss, re3tt
python run_eval.py --filter DATASET --latex table.tex
\end{lstlisting}

\mypara{Figure~\ref{fig:efficiency-boxplot} (Efficiency).}
\begin{lstlisting}
python run_eval.py --time \
  --efficiency-json efficiency.json
python visualize_box_plot_efficiency.py \
  --input efficiency.json \
  --output efficiency_boxplot.pdf
\end{lstlisting}

\mypara{Figure~\ref{fig:robustness-data-lengths} (Sensitivity).}
\begin{lstlisting}
python run_robustness.py --method prism \
  --ratio 0.1 0.2 0.3 0.4 0.5 0.6 0.7 0.8 0.9 1.0
python robustness_plot.py
\end{lstlisting}

\vspace{-5pt}
\subsection{Re-running Experiments from Scratch}
\vspace{-5pt}

To verify results by running all methods from scratch:

\vspace{-5pt}
\begin{lstlisting}
# Run all methods (approx. 2 hours)
python run_bench.py --methods prism baro pc_pagerank \
  cholesky counterfactual score_ordering \
  simple_traversal smooth_traversal --force

# Generate evaluation tables
python run_eval.py --latex results.tex
\end{lstlisting}
\vspace{-5pt}
\noindent \textbf{Note:} RCLAgent requires an OpenAI API key (\texttt{OPENAI\_API\_KEY} environment variable) and takes several hours due to LLM inference.

\end{document}